\documentclass{article}

\usepackage{microtype}
\usepackage{graphicx}
\usepackage{subcaption}
\usepackage{booktabs} 

\usepackage{hyperref}

\usepackage[accepted]{icml2026}

\usepackage{amsmath}
\usepackage{amssymb}
\usepackage{mathtools}
\usepackage{amsthm}

\usepackage[capitalize,noabbrev]{cleveref}

\theoremstyle{plain}
\newtheorem{theorem}{Theorem}[section]
\newtheorem{proposition}[theorem]{Proposition}

\theoremstyle{definition}
\newtheorem{definition}[theorem]{Definition}

\theoremstyle{remark}

\usepackage[disable,textsize=tiny]{todonotes}

\icmltitlerunning{Fair Decisions from Calibrated Scores}

\usepackage{accents}

\DeclareMathOperator{\supp}{supp}
\newcommand{\CC}{\mathcal{C}}
\newcommand{\CCC}{\mathcal{C}^0\cap\mathcal{C}^1}
\DeclareMathOperator{\ppv}{PPV}
\DeclareMathOperator{\for}{FOR}
\DeclareMathOperator{\tpr}{TPR}
\DeclareMathOperator{\fpr}{FPR}
\DeclareMathOperator{\E}{\mathbb E}
\newcommand{\smax}{s_\mathrm{max}}
\newcommand{\smin}{s_\mathrm{min}}
\newcommand{\ptop}{p^*}
\newcommand{\qbottom}{q^*}
\newcommand{\pmax}{p_\mathrm{max}}
\newcommand{\qmin}{q_\mathrm{min}}
\newcommand{\dsep}{\Delta_\mathrm{sep}}
\renewcommand{\algorithmiccomment}[1]{\hfill // #1}

\begin{document}

\twocolumn[
  \icmltitle{Fair Decisions from Calibrated Scores:\\
       Achieving Optimal Classification While Satisfying Sufficiency}



  \icmlsetsymbol{equal}{*}

  \begin{icmlauthorlist}
    \icmlauthor{Etam Benger}{huji}
    \icmlauthor{Katrina Ligett}{huji,ratio}
  \end{icmlauthorlist}

  \icmlaffiliation{huji}{School of Computer Science and Engineering, The Hebrew University of Jerusalem, Israel}
  \icmlaffiliation{ratio}{The Federmann Center for the Study of Rationality, The Hebrew University of Jerusalem, Israel}

  \icmlcorrespondingauthor{Etam Benger}{etam.benger@mail.huji.ac.il}
  \icmlcorrespondingauthor{Katrina Ligett}{katrina.ligett@mail.huji.ac.il}

  \icmlkeywords{Algorithmic Fairness, Sufficiency, Predictive Parity, Classification, Machine Learning, ICML}

  \vskip 0.3in
]



\printAffiliationsAndNotice{}  

\begin{abstract}
  Binary classification based on predicted probabilities (scores) is a fundamental task in supervised machine learning. While thresholding scores is Bayes-optimal in the unconstrained setting, using a single threshold generally violates statistical group fairness constraints. Under independence (statistical parity) and separation (equalized odds), such thresholding suffices when the scores already satisfy the corresponding criterion. However, this does not extend to sufficiency: even perfectly group-calibrated scores---including true class probabilities---violate predictive parity after thresholding. In this work, we present an exact solution for optimal binary (randomized) classification under sufficiency, assuming finite sets of group-calibrated scores. We provide a geometric characterization of the feasible pairs of positive predictive value (PPV) and false omission rate (FOR) achievable by such classifiers, and use it to derive a simple post-processing algorithm that attains the optimal classifier using only group-calibrated scores and group membership. Finally, since sufficiency and separation are generally incompatible, we identify the classifier that minimizes deviation from separation subject to sufficiency, and show that it can also be obtained by our algorithm, often achieving performance comparable to the optimum.
\end{abstract}

\section{Introduction and Related Work}

Making decisions based on predicted probabilities is a central task in machine learning, statistics, and data-driven decision-making. Given a score that estimates the probability of a positive outcome, the Bayes-optimal decision rule in the unconstrained setting is simple: apply a deterministic threshold to the score~\cite{wasserman2004all}. This principle underlies many everyday decisions, from acting on a weather forecast to standard classification pipelines.

In high-stakes applications, however, decision making is often subject to fairness constraints. Algorithms increasingly affect outcomes in domains such as college admissions, lending, hiring, healthcare, and criminal justice, where concerns about discrimination and disparate treatment across protected groups are paramount. As a result, a substantial literature has developed around formal notions of fairness in classification, typically expressed as statistical constraints relating a prediction or decision $R$, a target variable $Y$, and a protected attribute $A$.

Three prominent criteria of algorithmic fairness are independence, separation, and sufficiency~\cite{barocas-hardt-narayanan}. Independence (often called demographic parity) requires the decision $R$ to be statistically independent of the protected attribute $A$. Separation (including equalized odds and equal opportunity~\cite{hardt2016equality}) requires equality of error rates across groups. Sufficiency (also known as predictive parity or group-calibration) requires that, conditional on the decision outcome, the prevalence of the positive label be the same across groups. These terms capture different fairness notions and are known to be mutually incompatible in general, except in trivial cases~\cite{chouldechova2017fair,kleinberg_et_al2017}.

A common feature of these criteria is that, when applied to classification, they typically cannot be satisfied by a single, group-agnostic threshold on the score~\cite{hardt2016equality,corbett2017algorithmic}. Instead, satisfying independence or separation usually requires group-specific decision rules or explicit post-processing that depends on the protected attribute. An important exception arises when the score itself already satisfies the desired criterion: if a score satisfies independence or separation, then any post-processing preserves the property, and a single threshold can be used fairly.

This convenient property does not extend to sufficiency. Even when a score is perfectly calibrated within each group---including the ideal case where the score equals the true conditional probability $P(Y=1|X,A)$---applying a single threshold generally produces decisions that violate sufficiency~\cite{chouldechova2017fair,canetti2019soft}. This phenomenon has been highlighted in both theoretical and empirical studies and plays a central role in well-known case studies such as the COMPAS recidivism risk score~\cite{angwin2016machine}, which takes values between 1 and 10 and is used to make binary decisions such as bail decisions.

At the same time, sufficiency has received significant attention in the algorithmic fairness literature~\cite{barocas-hardt-narayanan}. It is an appealing notion for its compatibility with accuracy (in contrast, in general, achieving independence or separation requires sacrificing accuracy~\cite{dwork2012fairness}) and for its relationship to the notion of calibration.

Two main approaches have been proposed to address the problem of sufficiency in decision-making. \citet{canetti2019soft} introduce classifiers that satisfy sufficiency by allowing an algorithm to abstain from making a decision on some instances. \citet{baumann2022enforcing} show that sufficiency can be achieved without abstentions by suitable post-processing, but their analysis relies on strong assumptions about the score distribution, notably continuity and full support. In contrast, many practical systems operate with discrete, finite-valued scores, for which these assumptions do not hold. Other related works do not guarantee exact sufficiency, instead relaxing the problem either through approximate objectives \citep[e.g.,][]{celis2019classification,delaney2024oxonfair}, or by enforcing only the condition for positive decisions \citep[e.g.,][]{zeng2022fair}.

{\bf Our contributions.}~~In this work, we study binary classification under sufficiency in the practically relevant setting of finite-valued, group-calibrated scores. Our focus is on post-processing methods of classification that take as input only a score and group membership and output a (possibly randomized) binary decision. We give an exact characterization of the set of achievable classifiers under sufficiency and use it to derive optimal fair decision rules.

We provide a geometric characterization of the feasible pairs of positive predictive value (PPV) and false omission rate (FOR) attainable by binary classifiers based on a calibrated score. This characterization yields a complete description of the ``feasible region'' and its boundary. Next, we characterize the intersection of the group-wise feasible regions induced by sufficiency and show how to trace its boundary efficiently. This leads to a simple post-processing algorithm\footnote{\raggedright Code will be made available at \url{https://github.com/etambenger/fair-decisions-from-calibrated-scores}.} that constructs the optimal sufficient classifier for a wide class of objectives, including loss minimization, using only group-calibrated scores and group membership. Finally, recognizing that sufficiency and separation are generally incompatible, we study classifiers that satisfy sufficiency while minimizing deviation from separation. We show that this objective can be optimized within the same geometric framework and often yields classifiers with performance close to that of the loss-optimal solution.

\section{Preliminaries}

\begin{definition}[Sufficiency]\label{def:sufficiency}
Let $Y$ be a target variable and $A$ a protected attribute. A predictor $S$ of $Y$ satisfies \emph{sufficiency} if $Y$ and $A$ are conditionally independent given $S$, that is, \[P(Y \mid S, A) = P(Y \mid S).\]
\end{definition}

Intuitively, sufficiency requires that the predictive meaning of a score be the same across groups. This notion is closely related to calibration.

\begin{definition}[Calibration and group-calibration]
Let $Y$ be a binary target variable. A score $S\in[0,1]$ is \emph{calibrated} with respect to $Y$ if \[P(Y=1 \mid S=s) = s\] for all $s\in\supp S$. If $A$ denotes population groups, $S$ is \emph{group-calibrated} if \[P(Y=1 \mid S=s, A=a) = s\] for all $a\in\supp A$, $s\in\supp S$ with $P(S\!=\!s,A\!=\!a)>0$.
\end{definition}

If $S$ is group-calibrated, then it satisfies sufficiency, and the converse also holds up to a relabeling of $S$ \citep[see, e.g.,][]{barocas-hardt-narayanan}. For nonconstant binary predictors, sufficiency is equivalent to predictive parity, defined next.

\begin{definition}[Predictive parity]\label{def:pparity}
Let $Y$ be binary and $R$ a nonconstant binary predictor. Define the positive predictive value and false omission rate as
\begin{align*}
\ppv(R) &:= P(Y=1 \mid R=1), \\
\for(R) &:= P(Y=1 \mid R=0).
\end{align*}
$R$ satisfies \emph{predictive parity}\footnote{Some authors use ``predictive parity'' for equality of $\ppv$ alone; here we require equality of both $\ppv$ and $\for$.} if, for all $a\in\supp A$,
\begin{align*}
P(Y=1 \mid R=1, A=a) &= \ppv(R), \\
P(Y=1 \mid R=0, A=a) &= \for(R).
\end{align*}
\end{definition}

\section{Feasibility in the Unconstrained Setting}
We begin by characterizing all attainable $(\ppv,\for)$ pairs, ignoring for now the role of $A$. In Section~\ref{sec:multigroup}, sufficiency will be enforced by requiring the same pair to be feasible for every group.

Let $Y$ be a binary target variable and denote by $\pi:=P(Y=1)$ its base rate, or prevalence, and let $S$ be a finite-valued calibrated prediction of $Y$. We denote the values in $\supp S$ in descending order, $1\geq\smax=s_1>\dots>s_m=\smin\geq0$, and write $P(s_i)$ for short when $S$ is implicitly understood. The calibration of $S$ implies that
\begin{equation}
    \E[S] = \smashoperator{\sum_{i\in[m]}} P(s_i)\,s_i
    = \smashoperator{\sum_{i\in[m]}} P(s_i)\,P(Y\!=\!1|s_i) = \pi , \label{eq:ES}
\end{equation}
and to avoid degenerate cases, we assume that $m:=|\supp S|>1$, so $\smin<\pi<\smax$.

Let $R=R(S)$ be a (possibly randomized) binary classifier that predicts $Y$ based on the scores $S$ alone. More precisely, we consider a Markov chain $Y\leftrightarrow S\leftrightarrow R$, that is, $P(R|S,Y)=P(R|S)$. We restrict our analysis to nonconstant classifiers, and denote
\begin{align*}
    p&:=\ppv(R) = P(Y=1 \mid R=1), \\
    q&:=\for(R) = P(Y=1 \mid R=0).
\end{align*}

The joint distribution of $S$ and $Y$ constrains the values of $p$ and $q$ that such classifiers can attain. For instance, the Markov condition together with the calibration of $S$ implies
\begin{align*}
    p
    &= \sum\nolimits_{i\in[m]} P(Y=1\mid s_i)\,P(s_i\mid R=1) \\
    &= \sum\nolimits_{i\in[m]} s_i\, P(s_i\mid R=1)
    \;\leq\; \smax .
\end{align*}
In this section, we characterize the set of feasible pairs $(p,q)$ attainable by classifiers of this form.

\begin{definition}[Feasibility]\label{def:feasibility}
    Given a joint distribution $P(Y, S)$, we say that a pair $(p, q)\in[0,1]^2$ is \emph{feasible} if there exists a nonconstant randomized binary classifier $R=R(S)$, based on $S$ alone, such that $\ppv(R)=p$ and $\for(R)=q$. The \emph{feasible region} is the set $\CC=\CC_{P(Y,S)}\subset[0, 1]^2$ of all feasible pairs.
\end{definition}

Since the labels of $R$ are arbitrary, there is symmetry between $p$ and $q$ (and thus in $\CC$). Therefore, to simplify the analysis, we assume without loss of generality that $q\leq p$.

\subsection{Geometry of the Feasible Region}

The joint distribution of $R$ and $Y$ is completely determined by $p$, $q$ and the selection rate of $R$, $\mu:=P(R=1)$. Throughout, we restrict our attention to nonconstant classifiers, so $0<\mu<1$. Accordingly, we say that a pair $(p,q)\in\CC$ is feasible with $\mu$ if it is attained by a classifier $R$ satisfying $P(R=1)=\mu$. The base rate $\pi$ then enforces a linear constraint relating these three quantities, which admits a simple geometric interpretation.

If $(p,q)\in\CC$ is feasible with $\mu$, then
\begin{align}
    \pi &= P(R\!=\!1)P(Y\!=\!1|R\!=\!1)
    \!+\!P(R\!=\!0)P(Y\!=\!1|R\!=\!0) \nonumber \\
    &= \mu p + (1-\mu)q . \label{eq:pi}
\end{align}
Geometrically, this implies that for any fixed $\mu$, all pairs $(p,q)$ feasible with $\mu$ lie on a single line of slope $-\mu/(1-\mu)$ passing through $(\pi,\pi)$. Moreover, since $q\leq p$ and $\mu>0$, any nontrivial feasible pair satisfies $q<\pi<p$, with equality only when $p=q=\pi$.

The classifier $R$ itself is determined by its selection rule, that is, the selection probabilities $P(R=1|s_i)$ for $i\in[m]$. Therefore, we can equivalently define feasibility in terms of selection rules by the following two equalities, in addition to \eqref{eq:pi}:
\begin{equation}
    \mu = \sum\nolimits_{i\in[m]}P(s_i)\,P(R=1|s_i) \label{eq:mu}
\end{equation}
and
\begin{align}
    p &= \sum\nolimits_{i\in[m]}P(Y=1|s_i)\,P(s_i|R=1) \nonumber \\
    &= \sum\nolimits_{i\in[m]}s_i\, \frac{P(s_i)}{P(R=1)}\,P(R=1|s_i) \nonumber \\
    &= \frac{1}{\mu} \sum\nolimits_{i\in[m]} s_i\,P(s_i)\,P(R=1|s_i) , \label{eq:mup}
\end{align}
where the first equality follows from the Markov condition and the second from the calibration of $S$.

The linearity of \eqref{eq:mup} in the selection probabilities for a fixed $\mu$ implies another geometric property of $\CC$, namely, its convexity at each given $\mu$. Consider a classifier $R^\eta$, defined by $P(R^\eta=1|s_i)=\eta P(R=1|s_i) + (1-\eta)\mu$, for $\eta\in[0,1]$. This is the $\eta$-mixture of $R$ with an independent Bernoulli random variable with parameter $\mu$, hence $P(R^\eta=1)=\mu$ as well. Following the same steps as in \eqref{eq:mup}, we have
\begin{align*}
    \ppv(R^\eta) &= \frac{1}{\mu}\sum_{i\in[m]} s_i\,P(s_i)\big( \eta P(R=1|s_i) + (1-\eta)\mu \big) \\
    &= \eta p + (1-\eta)\sum\nolimits_{i\in[m]} s_i\,P(s_i) \\
    &= \eta p + (1-\eta)\pi,
\end{align*}
where the last equality follows from \eqref{eq:ES}. Similarly, algebraic manipulation using \eqref{eq:pi} shows that $\for(R^\eta) = \eta q + (1-\eta)\pi$ (see Appendix~\ref{app:q'} for details). Therefore, the convex combination $\eta(p, q) + (1-\eta)(\pi, \pi)$ is feasible with $\mu$ for all $\eta\in[0,1]$.

This result does not imply that $\CC$ is convex---in fact, it is typically not. However, it does imply that $\CC$ is star-convex\footnote{A set $\mathcal S\subseteq\mathbb R^d$ is \emph{star-convex} with center $x_0\in\mathcal S$ if, for every $x\in\mathcal S$, the line segment between $x_0$ and $x$ is contained in $\mathcal S$.} with center at $(\pi,\pi)$. Since every feasible pair satisfies $p\ge\pi$ and $q\le\pi$, it is therefore sufficient for the purpose of characterizing $\CC$ to determine, for each fixed $\mu$, the extremal feasible values: the maximum achievable $p$ and the
corresponding minimum $q$.

\begin{definition}
For $0<\mu<1$, let $\CC_\mu\subset\CC$ be the set of pairs $(p,q)$ feasible with $\mu$. Define
\begin{align*}
    \ptop(\mu)
    &:= \sup \{\,p\in[\pi,1] \mid (p,q)\in\CC_\mu
    \text{ for some } q\in[0,\pi]\,\}, \\
    \qbottom(\mu)
    &:= \inf \{\,q\in[0,\pi] \mid (p,q)\in\CC_\mu
    \text{ for some } p\in[\pi,1]\,\}.
\end{align*}
\end{definition}

For a fixed $\mu$, \eqref{eq:mu} and \eqref{eq:mup} show that maximizing $p$ over all selection rules $P(R=1 | S)$ is a linear program, equivalent to a fractional knapsack problem. The next theorem applies the greedy solution of this problem~\cite{dantzig1957discrete} to show that $(\ptop(\mu),\qbottom(\mu))$ is feasible for all $0<\mu<1$, and is attained by a soft thresholding rule on $S$.
\begin{theorem}\label{thm:ttop}
    Let $0<\mu<1$ and define $k^*=k^*(\mu):=\min\{\,k\mid \sum_{i\leq k}P(s_i)\geq\mu\,\}$. Then the pair $(\ptop(\mu), \qbottom(\mu))$ is attained by $R^*=R^*(S;\mu)$, defined according to
    \begin{equation}\label{eq:ttop}
        P(R^*\!=\!1|s_i) = \begin{cases}
            1 & i\!<\!k^* ,\\
            \dfrac{\mu-\sum_{j<k^*}P(s_j)}{P(s_{k^*})} & i\!=\!k^* ,\\
            0 & i\!>\!k^* .
        \end{cases}
    \end{equation}
\end{theorem}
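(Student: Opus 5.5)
Fix $0<\mu<1$ and write $x_i:=P(R=1|s_i)\in[0,1]$ for the selection rule of an arbitrary classifier $R$ feasible with $\mu$. By \eqref{eq:mu} and \eqref{eq:mup}, the constraint is $\sum_i P(s_i)x_i=\mu$ and the objective is $\mu p=\sum_i s_iP(s_i)x_i$. Maximizing $p$ over $\CC_\mu$ is therefore the linear program of maximizing $\sum_i s_i w_i$ over the ``masses'' $w_i:=P(s_i)x_i\in[0,P(s_i)]$ subject to $\sum_i w_i=\mu$. This is a fractional knapsack with capacity $\mu$, item sizes $P(s_i)$ and value density $s_i$. I would first check that $R^*$ is admissible. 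By the definition of $k^*$ we have $\sum_{j<k^*}P(s_j)<\mu\le\sum_{j\le k^*}P(s_j)$, so the middle entry of \eqref{eq:ttop} lies in $(0,1]$. Hence $R^*$ is a valid rule with $P(R^*=1)=\mu$, and it is nonconstant because $0<\mu<1$.

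The main step is to show that $R^*$ maximizes $\sum_i s_iw_i$. I would use an exchange argument rather than citing \citet{dantzig1957discrete}. Let $w^*$ be the masses of $R^*$, so $w^*$ fills the top scores first. For any feasible $w$, put $d=w-w^*$. Then $\sum_i d_i=0$, $d_i\le0$ for $i<k^*$ (since $w^*_i=P(s_i)$ is maximal there), and $d_i\ge0$ for $i>k^*$ (since $w^*_i=0$ there). Subtracting $s_{k^*}\sum_i d_i=0$ gives
\[\textstyle\sum_i s_i d_i=\sum_i (s_i-s_{k^*})d_i\le0,\]
because every term with $i<k^*$ has $s_i-s_{k^*}>0$ and $d_i\le0$, and every term with $i>k^*$ has $s_i-s_{k^*}<0$ and $d_i\ge0$. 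Thus $\ppv(R)\le\ppv(R^*)$, so $\ptop(\mu)=\ppv(R^*)$ and the supremum is attained.

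Finally I would transfer the result to $q$ using \eqref{eq:pi}. For fixed $\mu$, $q=(\pi-\mu p)/(1-\mu)$ is strictly decreasing in $p$. So the classifier that maximizes $p$ simultaneously minimizes $q$ over $\CC_\mu$, giving $\qbottom(\mu)=\for(R^*)$, and the pair $(\ptop(\mu),\qbottom(\mu))$ is attained by the single classifier $R^*$. One point remains: the definitions of $\ptop$ and $\qbottom$ restrict to $p\in[\pi,1]$ and $q\in[0,\pi]$. This is consistent because $R^*$ achieves $p\ge\pi$. Indeed, the mixture argument already given places $(\pi,\pi)$ in $\CC_\mu$ (take $\eta=0$), so the maximum is at least $\pi$, and then $q\le\pi$ follows from \eqref{eq:pi}. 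I expect the exchange inequality to be the only nontrivial step. The remaining care is in the bookkeeping of signs at the boundary index $k^*$, where $d_{k^*}$ may have either sign but contributes zero because its coefficient $s_{k^*}-s_{k^*}$ vanishes.
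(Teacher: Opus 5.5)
Your proof is correct and follows the same route as the paper. Both recast maximizing $p$ at fixed $\mu$ as a fractional knapsack whose greedy solution is $R^*$, then use \eqref{eq:pi} to note that $q$ is strictly decreasing in $p$, so $R^*$ also attains $\qbottom(\mu)$. The one difference is that you prove greedy optimality directly via the exchange inequality $\sum_i (s_i-s_{k^*})d_i\le 0$ rather than citing the classical greedy result, and you explicitly check that $R^*$ is admissible and that the restrictions $p\ge\pi$, $q\le\pi$ are consistent, which the paper leaves implicit.
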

A detailed proof is given in Appendix~\ref{app:ttop}.

Since the scores $s_i$ are listed in descending order, the classifier $R^*$ acts almost like a threshold: it deterministically predicts $1$ for all score bins strictly above $s_{k^*}$ and $0$ for all score bins strictly below $s_{k^*}$, while possibly returning a randomized prediction on the bin corresponding to $s_{k^*}$. An important consequence of Theorem~\ref{thm:ttop} is that $R^*(S;\mu)$ becomes a purely deterministic (``hard'') threshold exactly at the values
$\mu_k:=\sum\nolimits_{i\leq k} P(s_i)$ for $k\in[m]$. In those cases we have $R^*(S;\mu_k)=1$ iff $S\geq s_k$.

Moreover, Theorem~\ref{thm:ttop}, together with the convexity argument above, yields an explicit construction of a selection rule---and hence a classifier $R(S)$---to attain any given pair $(p,q)\in\CC$. Specifically, such a classifier is obtained as an appropriate mixture of $R^*(\mu)$ and an independent Bernoulli random variable with parameter $\mu$, where $\mu=(\pi-q)/(p-q)$ follows from \eqref{eq:pi} (for $p,q\neq\pi$; otherwise the claim is trivial).

\subsection{Boundary of the Feasible Region}

As discussed above, the star-convexity of $\CC$ implies that its structure is fully determined by its boundary. The boundary of $\CC$ can therefore be divided into two parts. First, a \emph{trivial} part consisting of two straight edges at $p=\pi$ and $q=\pi$, implied by the constraint $q\le \pi \le p$; these edges are not feasible (except at $(p,q)=(\pi,\pi)$), since we require $0<\mu<1$. Second, a \emph{nontrivial} part, traced by $(\ptop(\mu), \qbottom(\mu))$ as $\mu$ ranges over $(0,1)$. In what follows, we refer to this nontrivial part simply as the boundary of $\CC$ and denote it by $\partial\CC$. The simple form of the selection rule in Theorem~\ref{thm:ttop} allows for a piecewise characterization of $\partial\CC$.

Recall the values $\mu_k:=\sum\nolimits_{i\leq k}P(s_i)$, which correspond to the deterministic thresholds on $S$, and let $\mu_0:=0$. We partition the interval $(0,1)$ accordingly into $m$ subintervals of the form $I_k := (\mu_{k-1}, \mu_k]\cap(0,1)$. Note that $k^*(\mu)$, as defined in Theorem~\ref{thm:ttop}---that is, the index of the score bin on which $R^*$ may randomize---is constant on each interval $I_k$ and equals $k$. Thus, combining Equation~\eqref{eq:ttop} with Equation~\eqref{eq:mup}, we obtain a simple explicit formula for $\ptop(\mu)$ on each interval $I_k$:
\begin{align}
    \ptop(\mu) &= \frac{1}{\mu}\left( \sum\nolimits_{i<k} s_i P(s_i) + s_k \left( \mu - \sum\nolimits_{j<k}P(s_j) \right) \right) \nonumber \\
    &= s_k + \frac{c_k}{\mu}, \label{eq:ptop}
\end{align}
where we define $c_k:=\sum\nolimits_{i<k} P(s_i)(s_i-s_k)$ for short.

One can show that $\ptop(\mu)$ is continuous at the endpoints of the partition intervals $I_k$ (see Appendix~\ref{app:continuity}). Moreover, for $k=1$ we have $c_1=0$, hence $\ptop(\mu)\equiv s_1=\smax$ is constant on $I_1$. In contrast, for $k>1$, we have $c_k>0$, since the scores are in strictly descending order. Therefore, $\ptop(\mu)$ is strictly decreasing in $\mu$ on each $I_k$ with $k>1$, and, by continuity, it is strictly decreasing on $(\mu_1,1)$.

In the following sections, in order to account for fairness constraints, we extend the analysis to multiple groups and study the intersection of their respective feasible regions. While parametrizing the boundary by the selection rate $\mu$ was convenient for deriving the preceding formulas, in the multi-group setting the group-wise selection rates may differ. It is therefore more convenient to use $p$ as a common boundary parameter. To this end, we exploit the monotonicity established above and induce a partition of the interval $(\pi,\smax)$ from the
partition $\{I_k\}$ of $(0,1)$.

For each $k>1$, define $J_k := \ptop(I_k)$. Then $J_k = [p_k,p_{k-1})$ for $1<k<m$ and $J_m = (p_m,p_{m-1})$, where $p_k:=\ptop(\mu_k) = \frac{1}{\mu_k}\sum\nolimits_{i\leq k} s_i P(s_i)$ (see Appendix~\ref{app:pk} for details). By Equation~\eqref{eq:ES}, $p_m=\pi$, and since $p_1=s_1=\smax>\pi$, the intervals $\{J_k\}$ form a partition of $(\pi,\smax)$. In particular, by the strict monotonicity of $\ptop$ on $(\mu_1,1)$, each $J_k$ satisfies $I_k={\ptop}^{-1}(J_k)$.

Now, let $(p, q)\in\partial\CC$ with $p<\smax$, so $p\in J_k$ for some $k>1$. Since the boundary consists of the points $(\ptop(\mu), \qbottom(\mu))$, there exists $0<\mu<1$, such that $p=\ptop(\mu)$, and it follows that $\mu\in{\ptop}^{-1}(J_k)=I_k$. Therefore, by Equation~\eqref{eq:ptop}, $\mu = \mu(p) = c_k/(p - s_k)$. Plugging this into Equation~\eqref{eq:pi} and rearranging the expression gives us a formula for $\qbottom$ in terms of $p$, instead of $\mu$:
\begin{equation}\label{eq:q(p)}
    q = q(p) = \qbottom(\mu(p)) = \frac{(\pi-c_k)p - s_k\pi}{p - s_k - c_k} .
\end{equation}

This analysis brings us to the following characterization of the boundary of the feasible region:
\begin{theorem}\label{thm:q(p)}
    The closure of the nontrivial boundary curve of $\CC$ consists of a continuous nondecreasing curve $q(p)$, defined piecewise by Equation~\eqref{eq:q(p)} on each interval $J_k\subset[\pi,\smax)$, together with a vertical segment at $(\smax,q)$ for $q\in[q(\smax^-),\pi]$.
\end{theorem}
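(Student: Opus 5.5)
The plan is to parametrize the nontrivial boundary by $\mu\in(0,1)$, using Theorem~\ref{thm:ttop} together with Equation~\eqref{eq:pi}. This gives
\[
\qbottom(\mu)=\frac{\pi-\mu\,\ptop(\mu)}{1-\mu}.
\]
I would then split the range of $\mu$ into $I_1$ and the intervals $I_k$ with $k>1$.

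\textbf{The piece on $I_1=(0,\mu_1]$.} Here $\ptop\equiv\smax$, so the boundary points are $(\smax,(\pi-\mu\smax)/(1-\mu))$. The map $\mu\mapsto(\pi-\mu\smax)/(1-\mu)$ is continuous. Its derivative has the sign of $\pi-\smax<0$, so it is strictly decreasing. As $\mu\to0^+$ it tends to $\pi$, and at $\mu=\mu_1$ it equals $\qbottom(\mu_1)$. These points therefore trace the vertical segment $\{\smax\}\times[\qbottom(\mu_1),\pi)$, whose closure adds the endpoint $q=\pi$. It remains to identify $\qbottom(\mu_1)$ with $q(\smax^-)$. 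By continuity of $\ptop$ (Appendix~\ref{app:continuity}), $\ptop(\mu)\to\smax$ as $\mu\to\mu_1^+$, and $\qbottom$ is continuous in $\mu$ as a composition of continuous functions. Hence $q(p)\to\qbottom(\mu_1)$ as $p\to\smax^-$.

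\textbf{The piece on $\mu\in(\mu_1,1)$.} Here $\ptop$ is a continuous, strictly decreasing bijection onto $(\pi,\smax)$, with the partition $\{J_k\}$ corresponding to $\{I_k\}$. Its inverse $\mu(p)$ is therefore continuous and decreasing on $(\pi,\smax)$, and on each $J_k$ it is given by $c_k/(p-s_k)$. It follows that $q(p)=\qbottom(\mu(p))$ is continuous on $(\pi,\smax)$ and agrees with Equation~\eqref{eq:q(p)} on each $J_k$. Taking the closure adds the endpoint $p=\pi$; there $\mu\to1$, and since $(p,q)$ lies on a line through $(\pi,\pi)$ with $q\le\pi\le p$, we get $q(\pi^+)=\pi$.

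\textbf{Monotonicity, the main obstacle.} The remaining and hardest step is to show that $q(p)$ is nondecreasing. I would prove this piecewise and then use continuity to glue the pieces. On $J_k$, Equation~\eqref{eq:q(p)} is a Möbius function $(ap+b)/(p+d)$ with $a=\pi-c_k$, $b=-s_k\pi$, $d=-s_k-c_k$. Its derivative has the sign of
\[
ad-b=-(\pi-c_k)(s_k+c_k)+s_k\pi=c_k(c_k+s_k-\pi).
\]
Since $c_k>0$ for $k>1$, it suffices to show $s_k+c_k\ge\pi$. I would write
\[
s_k+c_k-\pi=\sum_{i}P(s_i)(s_k-s_i)+\sum_{i<k}P(s_i)(s_i-s_k)=\sum_{i>k}P(s_i)(s_k-s_i),
\]
using Equation~\eqref{eq:ES}. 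This sum is $\ge0$ because the scores are in descending order, with equality only when $k=m$, where $q$ is constant. The denominator $p-s_k-c_k$ does not vanish on $J_k$, because $\mu=c_k/(p-s_k)<1$ gives $p-s_k>c_k$. So each piece is continuous and nondecreasing, and continuity at the breakpoints $p_k$ gives global monotonicity. An alternative check, which avoids the derivative entirely, is that $\mu(p)$ is decreasing and $\mu\mapsto\qbottom(\mu)$ is decreasing, so their composition is nondecreasing. This route, however, needs the monotonicity of $\qbottom$ in $\mu$, and proving that reduces to the same inequality.
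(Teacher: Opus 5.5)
Your route is the paper's. You parametrize by $\mu$, read off the vertical segment from $I_1$, and get monotonicity on each $J_k$ from the sign of $c_k(s_k+c_k-\pi)$, using $s_k+c_k-\pi=\sum_{i>k}P(s_i)(s_k-s_i)$. Identifying $q(\smax^-)$ with $\qbottom(\mu_1)$ by continuity, rather than by the paper's explicit evaluation of \eqref{eq:q(p)} on $J_2$, is also fine. It needs continuity of the inverse $\mu(p)$ at $p=\smax$, which your next paragraph provides.

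There is, however, one false step: the closure endpoint at $p=\pi$. You argue that as $\mu\to1$ the boundary point lies on a line through $(\pi,\pi)$ with $q\le\pi\le p$, hence $q(\pi^+)=\pi$. But that line has slope $-\mu/(1-\mu)\to-\infty$, so it becomes vertical and $p\to\pi$ says nothing about $q$. In $\qbottom(\mu)=(\pi-\mu\ptop(\mu))/(1-\mu)$ the limit is of the form $0/0$. To compute it, note that for $\mu\in I_m$ your own identity with $k=m$ gives $c_m=\pi-s_m$, so
\[
\mu\,\ptop(\mu)=\mu s_m+c_m=\pi-s_m(1-\mu)
\qquad\Longrightarrow\qquad
\qbottom(\mu)=s_m=\smin\quad\text{for all }\mu\in I_m .
\]
The curve is therefore identically $\smin$ on $J_m$, and the closure point is $(\pi,\smin)$, not $(\pi,\pi)$. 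This is also exactly where \eqref{eq:q(p)} itself breaks down: for $k=m$ it reads $s_m(p-\pi)/(p-\pi)$. The continuous extension $q(\pi):=\smin$ is what makes the statement on $[\pi,\smax)$ meaningful, and the paper handles it this way.

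The error is not cosmetic; it contradicts the rest of your argument. You show that $q$ is constant on $J_m$, nondecreasing on $(\pi,\smax)$, and bounded above by $\pi$. Together with $q(\pi^+)=\pi$, this would force $q\equiv\pi$ on $(\pi,\smax)$, contradicting $q(\smax^-)=\qbottom(\mu_1)<\pi$ from your own $I_1$ analysis. Alternatively, adjoining $(\pi,\pi)$ as the closure point would make the curve jump at $p=\pi$, so the \emph{continuous nondecreasing} conclusion would fail.

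Replacing that sentence with the explicit computation on $I_m$ repairs the proof. The correct endpoints $(\pi,\smin)$ and $(\smax,\pi)$ are also what Appendix~\ref{app:pmax} later relies on to guarantee that $\partial\CC^1$ crosses the level $q=\pi^0$.
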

The proof appears in Appendix~\ref{app:q(p)}.\footnote{Formally, $\partial\CC$ excludes the endpoints at $p=\pi$ and $q=\pi$, which correspond to degenerate classifiers with $\mu=0$ and $\mu=1$, respectively. Nonetheless, working with the closed boundary simplifies the subsequent analysis.}

More specifically, $\partial\CC$ consists of two straight segments at $q=\smin$ and $p=\smax$, and, if $m>2$, a sequence of hyperbolic arcs on each $J_k$ with $1<k<m$ (see Figure~\ref{fig:C}). The breakpoints of this piecewise definition, $p_k$, correspond exactly to the values $\mu_k$, that is, to the hard threshold classifiers $R^*(S;\mu_k)$.

\begin{figure}[t]
    \centering
    \includegraphics[width=\columnwidth]{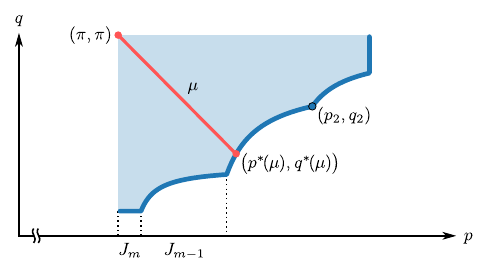}
    \caption{\textbf{Schematic of the feasible region $\CC$.}
        The shaded region depicts a typical feasible region $\CC$ corresponding to $m=5$ distinct score values; the blue curve denotes its nontrivial boundary $\partial\CC$. For a fixed selection rate $\mu$, all feasible pairs $(p,q)$ lie on the red line segment connecting $(\pi,\pi)$ to $(\ptop(\mu),\qbottom(\mu))$. The boundary is piecewise over intervals $J_k$ on the $p$-axis; for illustration, only $J_m$ and $J_{m-1}$ are shown. Among the $m$ breakpoints of the boundary, the point $(p_2,q_2)$ is shown as an example and corresponds to the deterministic threshold classifier $R^*(S;\mu_2)$.}
    \label{fig:C}
\end{figure}

Finally, Theorem~\ref{thm:q(p)} implies the following bounds for $\CC$: if $(p,q)\neq(\pi,\pi)$ is feasible with $0<\mu<1$, then
\begin{equation}\label{eq:bounds}
    \smin \leq \qbottom(\mu) \leq q < \pi < p \leq \ptop(\mu) \leq \smax .
\end{equation}

\section{Feasibility with Fairness Constraints}\label{sec:multigroup}

Having characterized the feasible region in the unconstrained setting, we now return to the original problem. In addition to a binary target variable $Y$ and a score $S$ representing a prediction of $Y$, the setting includes a finite-valued variable $A$ representing subgroups of the population. For simplicity, we assume that $A$ is binary; the extension to arbitrary finite support is straightforward. As a concrete example, $Y$ may indicate whether an individual completes a college degree, $S$ an admission test score, and $A$ a protected demographic attribute, such as gender.

In this setting, we assume that $S$ is a finite-valued group-calibrated prediction of $Y$, meaning that for all $s\in\supp S$ and $a\in\{0,1\}$, $ P(Y=1|S=s, A=a) = s$. We build on the notation of the previous section, adding explicit reference to $A$ when needed. In particular, we write $\pi^a:=P(Y=1|a)$ and we denote by $\supp^a S := \{ s\in\supp S \mid P(s|a)>0 \}$ the effective support of $S$ on the subgroup $a$. Importantly, we make no assumptions on these supports beyond finiteness: they may coincide across groups or be entirely disjoint. For each subgroup separately, as in the previous section, we denote $m^a:=|\supp^a S|$ and list the distinct scores in $\supp^a S$ in decreasing order, $1\geq \smax^a=s^a_1>\dots>s^a_{m^a}=\smin^a\geq 0$.

We are interested in (possibly randomized) binary classifiers $R=R(S,A)$ that take both $S$ and $A$ as input. For each $a\in\{0,1\}$, this implies the Markov condition $P(R | S,Y,A=a)=P(R | S,A=a)$, meaning that within each group, the classification depends on $S$ alone. For example, $R$ may represent a college admission decision based on an applicant's test score and protected attribute. We denote $\mu^a:=P(R=1 | a)$, $p^a=\ppv^a(R):=P(Y=1 | R=1,a)$, and $q^a=\for^a(R):=P(Y=1 | R=0,a)$, and consider group-specific selection rules $P(R | s^a_i,a)$.

Together with the group-calibration of $S$, the group-wise Markov property of $R$ establishes a direct analogy with the single-group setting. Specifically, conditioning on a fixed group $A=a$ recovers exactly the same structural conditions as in the unconstrained case. As a result, feasibility can be analyzed separately within each subgroup.

\begin{definition}[Subgroup-Feasibility]
    Given a joint distribution $P(Y, S, A)$, we say that a pair $(p^a, q^a)\in[0,1]^2$ is \emph{subgroup-feasible} for $a\in\supp A$ if it is feasible in the sense of Definition~\ref{def:feasibility}, given the conditional distribution $P(Y,S|a)$. The \emph{subgroup-feasible region} $\CC^a:=\CC_{P(Y,S|A=a)}$ is defined accordingly.
\end{definition}

\subsection{Sufficiency}

By Definition~\ref{def:sufficiency}, $R$ satisfies sufficiency with respect to $Y$ and $A$ if $P(Y=1|R,A)=P(Y=1|R)$. In our setting, this is equivalent to requiring $p^0=p^1=p$ and $q^0=q^1=q$ (see Definition~\ref{def:pparity}). Therefore, $R$ satisfies sufficiency if
\begin{equation*}
    (p, q) \in \CCC ,
\end{equation*}
making the intersection of the subgroup-feasible regions our main object of interest.

By Equation~\eqref{eq:bounds}, if $(p, q)\in \CCC$ then
\begin{equation}\label{eq:intersectionbounds}
    \begin{aligned}
        \max\{\pi^0, \pi^1\}< p &\leq \min\{\smax^0, \smax^1\} \\
        \text{and}\quad \max\{\smin^0, \smin^1\} \leq q&< \min\{\pi^0, \pi^1\} .
    \end{aligned}
\end{equation}
If these inequalities are infeasible or, more generally, if the intersection $\CCC$ is empty, there always exist classifiers $R=R(S,A)$ that satisfy sufficiency in a degenerate manner, meaning that they are constant on one or both of the subgroups. We treat degenerate cases in Appendix~\ref{app:degenerate}.

\subsection{The Intersection's Boundary}\label{sec:boundary}

In what follows, we assume that $\CCC\neq\varnothing$ and, without loss of generality, $\pi^0\leq\pi^1$.

By Theorem~\ref{thm:q(p)}, the boundary of each of the subgroup-feasible regions is nondecreasing in $p$. Therefore, the (nontrivial) boundary of the intersection, $\partial(\CCC)$, is given by the pointwise maximum of the two curves and is itself nondecreasing. More precisely, $\partial(\CCC)$ is defined by
\begin{equation}\label{eq:q(p)max}
    q(p) = \max\{q^0(p),\, q^1(p)\},
\end{equation}
possibly including a vertical segment at $p=\min\{\smax^0,\smax^1\}$. On portions of $\partial(\CCC)$ where it coincides with the boundary of $\CC^a$, we say that $\partial\CC^a$ is the \emph{active} boundary.

Before turning to a precise characterization of this curve, we make a few remarks that clarify its geometric and algorithmic implications.

First, every point on the boundary of $\CCC$ lies on the boundary of at least one of the subgroup-feasible regions $\CC^0$ or $\CC^1$, but not necessarily on both. As a consequence, in the typical case, almost all boundary points of $\CCC$ are interior points of one of the subgroup-feasible regions. In particular, this implies that in at least one subgroup such points cannot be attained by a deterministic (hard) threshold on $S$. Even more strikingly, the boundary of $\CCC$ may fail to contain \emph{any} point corresponding to a deterministic threshold in either subgroup (see, for example, Figure~\ref{fig:intersection}).

\begin{figure}[t]
    \centering
    \includegraphics[width=\columnwidth]{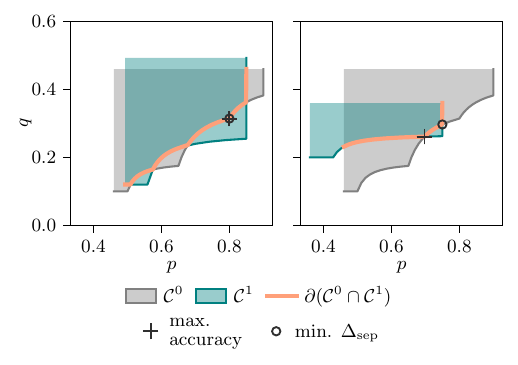}
    \caption{\textbf{Intersection of feasible regions.}
    Two examples of intersections between $\CC^0$ (gray) and $\CC^1$ (green); the orange
    curve denotes the boundary $\partial(\CCC)$ of the intersection. \emph{Left:} the group-wise boundaries intersect four times. \emph{Right:} the group-wise boundaries intersect twice; in particular, $\partial(\CCC)$ contains no breakpoints of either group, corresponding to deterministic threshold classifiers. In both cases, assuming $P(A=1)=\tfrac{1}{2}$, the point on $\partial(\CCC)$ that maximizes overall accuracy (equivalently, minimizes the $0$--$1$ loss) is marked by a black cross. The point that minimizes the deviation from separation, $\dsep(R)$, is marked by a black circle. In the left example, the two optima coincide, whereas in the right example the loss- and deviation-minimizing points occur at distinct locations along the boundary.}
    \label{fig:intersection}
\end{figure}

Second, since $\partial(\CCC)$ is a nondecreasing curve, it completely determines the entire intersection and, consequently, all the binary classifiers $R(S,A)$ that satisfy sufficiency. Nevertheless, in many cases, as we show in the following sections, we are specifically interested in points lying on the boundary itself. Therefore, tracing this boundary is of interest in its own right. The piecewise definition of the boundary of each subgroup-feasible region, given by Equation~\eqref{eq:q(p)}, suggests a clean algorithmic approach, as we show next.

We now turn to a precise characterization of the closure of $\partial(\CCC)$, beginning with its effective domain. By the bounds in Equation~\eqref{eq:intersectionbounds}, the representation of $q(p)$ in Equation~\eqref{eq:q(p)max} applies for $p\in[\pi^1, \min\{\smax^0,\smax^1\}]$, subject to the constraint $q(p)\leq \pi^0$. Since $q^0(p)\leq \pi^0$ for all $p$, the only potential restriction arises when $q^1(p)$ exceeds $\pi^0$. Accordingly, we define
\begin{equation}\label{eq:pmax}
    \pmax^1 := \max \{p\leq\smax^1 \mid q^1(p)\leq\pi^0\}
\end{equation}
(which exists because $\CCC\neq\varnothing$ implies $\smin^1<\pi^0$) and let $\pmax:=\min\{\smax^0,\pmax^1\}$. It follows that $q(p)$ is defined for all $p\in J:=[\pi^1,\pmax]$. In Appendix~\ref{app:pmax} we provide an algorithm to calculate $\pmax^1$.

Recall the partitions $J^a_k=[p^a_k, p^a_{k-1})$, which define the piecewise structure of $q^a(p)$ for each $a\in\{0,1\}$. We combine them to obtain a partition of $J$, defined by $J_{k,l} := J \cap J^0_k \cap J^1_l$ for all $1<k\leq m^0$ and $1<l\leq m^1$ for which the intersection is nonempty. On each subinterval $J_{k,l}$, the formulas for $q^0(p)$ and $q^1(p)$ are fixed, and given by \eqref{eq:q(p)}. However, the active boundary on $J_{k,l}$, that is, $\arg\!\max\nolimits_{a\in\{0,1\}} q^a(p)$, may change if the two curves cross. A simple rearrangement of Equation~\eqref{eq:q(p)} shows that $q^0(p)\geq q^1(p)$ on $J_{k,l}$ iff $\Phi_{k,l}(p)\geq 0$, where $\Phi_{k,l}$ is a quadratic whose coefficients depend on $\pi^0, s^0_k, c^0_k$ and $\pi^1,s^1_l,c^1_l$. The exact expressions are given in Appendix~\ref{app:active}.

Since $\Phi_{k,l}(p)$ has at most two real roots in $J_{k,l}$, we can further partition it into subintervals $J_{k,l,i}$ (at most three), according to the sign of $\Phi_{k,l}$ in the interior. On each such subinterval, both the formulas for $q^0(p)$ and $q^1(p)$ and their relative ordering are fixed, and hence so is the active boundary. Consequently, the expression for $q(p)$ is fixed as well, being equal to $q^0(p)$ when $\Phi_{k,l}\geq 0$ in the interior of $J_{k,l,i}$, and to $q^1(p)$ otherwise.

Iterating over all nonempty intervals $J_{k,l,i}$ returns the boundary curve of $\CCC$, unless $\pmax=\min\{\smax^0,\smax^1\}$. In that case, it remains to trace the vertical segment at $p=\pmax$, given by $(\pmax, q)$ for $q\in[q(\pmax^-), \pi^0]$.

In Algorithm~\ref{alg:boundary} we summarize the complete procedure for tracing $\partial(\CCC)$, while computing an arbitrary objective function along the boundary. As we show next, this approach can be used to search for a point on the boundary that minimizes a given loss function (see Section~\ref{sec:loss}) or the deviation from separation (see Section~\ref{sec:deviation}).

\begin{algorithm}[tb]
   \caption{\textbf{Boundary Trace.} Iterates over $\partial(\CCC)$, while keeping track of the active group and the indices $k$ and $l$, and evaluating a given objective function $\mathcal F$. See Appendix~\ref{app:active} for the definition of $\Phi_{k,l}$.}
   \label{alg:boundary}
\begin{algorithmic}
   \STATE \hspace*{-1em} {\bfseries Input:} Scores and weights $(s^a_i, P(s^a_i|A=a))$ for each of the groups $a\in\{0,1\}$, with scores in descending order; group probabilities $P(a)$
   \STATE \hspace*{-1em} {\bfseries Objective:} Objective function $\mathcal{F}$

   \medskip
   
   \STATE Compute $\pmax$ using Algorithm~\ref{alg:pmax} (see Appendix~\ref{app:pmax})
   \STATE $p_L \gets \max_{a\in\{0,1\}} \pi^a$
   
   \WHILE{$p_L < \pmax$}
      \STATE $k \gets \max\{k \in \{2, \dots, m^0\} : p^0_{k-1} > p_L\}$
      \STATE $l \gets \max\{l \in \{2, \dots, m^1\} : p^1_{l-1} > p_L\}$
      \STATE $p_R \gets \min\{p^0_{k-1},\, p^1_{l-1},\, \pmax\}$
      \STATE \COMMENT{we have $J_{k,l}=[p_L, p_R)$}
      
      \STATE $\mathit{roots}_\Phi \gets$ real roots of $\Phi_{k,l}(p)$ in $(p_L, p_R)$, in increasing order
      \FOR{{\bfseries each} $p_R' \in \mathit{roots}_\Phi \cup \{p_R\}$}
        \STATE \COMMENT{and now $J_{k,l,i}=[p_L, p_R')$}
        \STATE $a \gets 0$ {\bfseries if} $\Phi_{k,l}\left(\frac{p_L+p_R'}{2}\right)\geq 0$ {\bfseries else} $a \gets 1$
        \STATE \COMMENT{$a$ is the active boundary on $J_{k,l,i}$}
        \STATE Evaluate $\mathcal{F}$ on $[p_L, p_R')$ using $a, k, l$
        \STATE $p_L \gets p_R'$
      \ENDFOR
   \ENDWHILE

   \IF{$\pmax=\min\{\smax^0,\smax^1\}$}
        \STATE Evaluate $\mathcal{F}$ on the vertical segment at $p = \pmax$
    \ENDIF
\end{algorithmic}
\end{algorithm}

\section{Loss Minimization Under Sufficiency}\label{sec:loss}

Up to this point, we have characterized all pairs $(p,q)$ attainable as $p=\ppv(R):=P(Y=1 \mid R=1)$ and $q=\for(R):=P(Y=1 \mid R=0)$ by binary classifiers $R=R(S,A)$ that satisfy predictive parity. Specifically, for such classifiers the group-wise quantities $\ppv^a(R):=P(Y=1 \mid R=1,a)$ and $\for^a(R):=P(Y=1 \mid R=0,a)$ coincide with $p$ and $q$ for all groups $a$. Moreover, Theorem~\ref{thm:ttop} together with the star-convexity property yields an explicit description of the corresponding selection rules $P(R=1 \mid S,A)$ for each feasible pair $(p,q)$ in the intersection of the group-specific feasible
regions.

When this intersection is nonempty, a natural remaining question is how to choose among the feasible classifiers. In this section, we address this question by considering classifiers that minimize a given loss function.

Let $\ell:\{0,1\}^2 \to \mathbb{R}$ be a loss function assigning a cost $\ell(R,Y)$ to a prediction $R$ and a true label $Y$, and write $\ell_{ry}:=\ell(r,y)$. Without loss of generality, we assume $\ell_{00}=\ell_{11}=0$ and $\ell_{01}+\ell_{10}>0$ (otherwise, the labels of $R$ can be flipped). Our goal is to find a classifier $R$ that minimizes the expected loss $L:=\E_{R,Y}[\ell(R,Y)]$ subject to sufficiency. We show that any optimal classifier attains a pair $(p,q)\in\CCC$ lying on the boundary of the intersection, and can therefore be found by tracing this boundary using Algorithm~\ref{alg:boundary}.

For each $a\in\{0,1\}$ we have
\begin{align*}
    \E[\ell(R,Y)|a] &= P(R=1|a)\,P(Y=0|R=1,a)\ell_{10} \\
    &\quad + P(R=0|a)\,P(Y=1|R=0,a)\ell_{01} \\
    &= \mu^a(1-p^a)\ell_{10} + (1-\mu^a)q^a\ell_{01} \\
    &= \pi^a\ell_{01} + \mu^a\ell_{10} - \mu^ap^a(\ell_{01} + \ell_{10}),
\end{align*}
where the last equality follows from \eqref{eq:pi}.

Define the aggregate parameters $\pi:=P(Y=1)=P(A=0)\pi^0 + P(A=1)\pi^1$ and $\mu:=P(R=1)=P(A=0)\mu^0 + P(A=1)\mu^1$. Under sufficiency, that is, when $p^0=p^1=p$ and $q^0=q^1=q$, we get
\begin{equation}
    L = \E\big[\E[\ell(R,Y)|A]\big]
    = \pi\ell_{01} + \mu\ell_{10} - \mu p(\ell_{01} + \ell_{10}) . \label{eq:El}
\end{equation}
In addition, as in the unconstrained setting, $\pi = \mu p + (1-\mu) q$, so for a fixed $\mu$, all feasible pairs $(p, q)$ attained by classifiers with $P(R=1)=\mu$ lie on the same line that passes through $(\pi, \pi)$. Since $\ell_{01} + \ell_{10} > 0$, \eqref{eq:El} implies that, for any fixed $\mu$, the expected loss is minimized by maximizing $p$. Consequently, the loss-minimizing pair $(p, q)$ must lie on the boundary of the feasible set, which under sufficiency is precisely $\partial(\CCC)$.

Finally, for any point $(\pi, \pi)\neq (p, q)\in\CCC$, Equation~\eqref{eq:pi} gives $\mu = (\pi - q)/(p-q)$. Substituting this expression into \eqref{eq:El} yields
\begin{equation}\label{eq:Elpq}
    L = \pi\ell_{01} + \frac{\pi - q}{p-q}\big(\ell_{10} - p(\ell_{01} + \ell_{10}) \big).
\end{equation}
Using Algorithm~\ref{alg:boundary} to trace the boundary of $\CCC$, we can search for the exact minimizer of \eqref{eq:Elpq} on each interval $J_{k,l,i}$ (and the vertical segment at $p=\pmax$, if it exists). As shown in Appendix~\ref{app:loss}, this reduces to solving a quadratic equation and evaluating \eqref{eq:Elpq} at the interval endpoints, as well as at any interior critical points.

\section{Minimizing Deviation from Separation}\label{sec:deviation}

Sufficiency and separation (equalized odds) are known to be incompatible, except in trivial cases~\cite{chouldechova2017fair,kleinberg_et_al2017}. Nevertheless, among classifiers that satisfy sufficiency, the extent to which separation is violated can vary. In this section, we adopt the deviation-from-separation measure $\dsep(R)$ introduced in~\cite{benger2025mapping}, and study classifiers that satisfy predictive parity while minimizing $\dsep(R)$. We show that the optimal classifier is attained on the boundary of $\CCC$, and can therefore be found using Algorithm~\ref{alg:boundary}.

There is no guarantee that a classifier minimizing $\Delta_{\mathrm{sep}}(R)$ is also optimal with respect to any particular loss function. Nevertheless, in practice we find that its accuracy is often comparable to that of the optimal classifier. Importantly, this analysis enables practitioners to explicitly compare the loss-optimal solution with a classifier that minimizes deviation from separation, and to select between these two different objectives based on application-specific considerations.

The deviation from separation, measured using total variation divergence,\footnote{$\operatorname{TV}(P,Q):=\frac{1}{2}\sum\nolimits_x|P(x)-Q(x)|$.} is defined as
\begin{equation*}
    \dsep(R) := \E_{A,Y} \operatorname{TV}\big(P(R|A,Y),\,P(R|Y)\big).
\end{equation*}
This quantity measures the average discrepancy, across subgroups, between the subgroup-specific and aggregate true positive and false positive rates. In particular, separation---that is, in the binary case, equalized odds---requires these rates to coincide across subgroups, and hence any classifier satisfying equalized odds has $\Delta_{\mathrm{sep}}(R)=0$.

In Appendix~\ref{app:deviation} we show that
\begin{equation}\label{eq:Delta}
    \dsep(R) = K\left( \frac{1-\mu}{p-\pi} - \frac{\mu(p-\pi)}{\pi(1-\pi)} \right),
\end{equation}
where $K:= 2P(A=1)\,P(A=0)\,|\pi^1-\pi^0|\geq0$, and $\mu$ and $\pi$ are the aggregate parameters defined above. Differentiating with respect to $p$ while holding $\mu$ fixed yields
\begin{equation*}
    \frac{d}{dp}\dsep(R)
    =K\left( -\frac{1-\mu}{(p-\pi)^2} - \frac{\mu}{\pi(1-\pi)} \right) \leq 0,
\end{equation*}
which implies that, for any fixed $\mu$, the deviation from separation is either constant or minimized by maximizing $p$. Consequently, a feasible pair $(p,q)$ minimizing $\dsep(R)$ must lie on the boundary of $\CCC$.

As in the loss-minimization case, Algorithm~\ref{alg:boundary} can be used to search for the exact minimizer of \eqref{eq:Delta} along the boundary. Appendix~\ref{app:deviation} shows that this reduces to solving a quadratic equation and evaluating $\dsep(R)$ at the endpoints of each interval $J_{k,l,i}$, as well as at any interior critical points.

\section{Experiments}
We illustrate our method on three empirical case studies. Full implementation details and additional results are provided in Appendix~\ref{app:experiments}.

\subsection{FICO}
FICO scores are used to predict creditworthiness. We use the aggregate score statistics from \citet{barocas-hardt-narayanan}, with $Y$ indicating non-default and race as the protected attribute $A$. We restrict attention to the ``white'' ($A=0$) and ``black'' ($A=1$) groups.

An unconstrained group-aware Bayes classifier based on these calibrated scores attains accuracy $0.8819$, but violates predictive parity: the PPV values are $0.91$ and $0.79$, and the FOR values are $0.20$ and $0.13$, for the ``white'' and ``black'' groups, respectively. Our algorithm returns a classifier with common PPV $0.91$, common FOR $0.23$, and accuracy $0.8676$. Figure~\ref{fig:fico} shows the feasible regions and the optimal point on the intersection boundary. The optimum lies on $\partial\CC^1$ but in the interior of $\CC^0$, so the corresponding rule for the ``white'' group cannot be realized by thresholding alone.

\begin{figure}[h]
    \centering
    \includegraphics[width=\columnwidth]{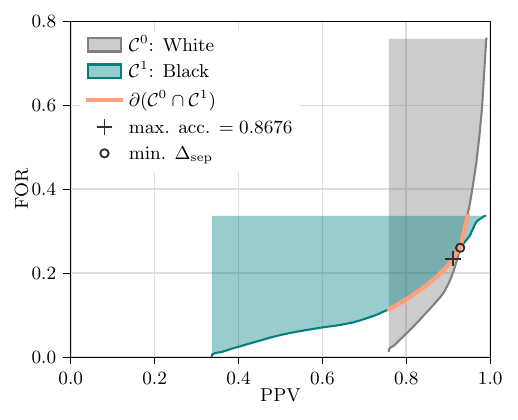}
    \caption{\textbf{FICO Scores.} Feasible regions for the ``white'' and ``black'' groups. With 200 score bins, the boundaries appear nearly smooth.}
    \label{fig:fico}
\end{figure}

\subsection{COMPAS}
COMPAS scores are used to predict recidivism risk. We use the data collected by \citet{angwin2016machine}, with race as the protected attribute, and restrict attention to the ``white'' ($A=0$) and ``black'' ($A=1$) groups. Unlike in the FICO case, official aggregate score statistics are not available. Motivated by the approximate sufficiency of COMPAS scores~\citep{chouldechova2017fair}, we assign each decile score $d$ its pooled empirical value $P(Y=1 \mid d)$, ignoring $A$, and estimate the group-specific weights $P(d \mid A=a)$. Thus, for this illustrative example, we treat the resulting scores as group-calibrated.

Figure~\ref{fig:compas} shows the feasible regions and the most accurate classifier satisfying sufficiency. The optimum lies on a breakpoint of $\partial\CC^1$, corresponding to a deterministic threshold that selects all individuals with score at least $5$ in the ``black'' group. In contrast, it lies in the interior of $\CC^0$, so the optimal rule for the ``white'' group cannot be realized by thresholding alone. The nearest breakpoint of $\partial\CC^0$ corresponds to the threshold selecting all individuals with score at least $6$.

\begin{figure}[h]
    \centering
    \includegraphics[width=\columnwidth]{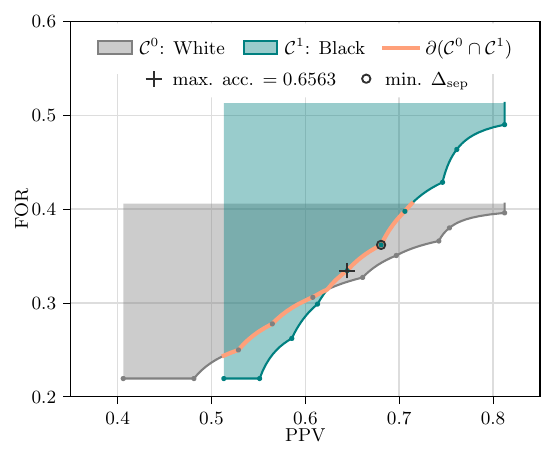}
    \caption{\textbf{COMPAS Scores.} Feasible regions for the ``white'' and ``black'' groups. Dots on the boundaries denote breakpoints corresponding to deterministic thresholds on the 10 decile scores.}
    \label{fig:compas}
\end{figure}

\subsection{ACS Income}
Finally, we provide an end-to-end implementation of our method on the California subset of the ACS Income dataset, based on the American Community Survey of the US Census Bureau~\citep{ding2021retiring}. Here group-calibrated scores are not given directly. The data contain demographic, education, and occupation features, and the task is to predict whether income exceeds a fixed threshold. We use sex as the protected attribute, with $A=0$ for the ``male'' group and $A=1$ for the ``female'' group.

In each repetition, we train a logistic regression model, use out-of-fold training predictions to calibrate its outputs separately for each group using $k=100$ equal-mass bins, and apply our post-processing algorithm on a held-out test set. The binning and calibrated score values are estimated from these out-of-fold predictions, while the score-bin weights are estimated on the test set without using test labels. We repeat this procedure over $100$ train/test splits.

Because calibration is estimated from out-of-fold training predictions, the scores are not perfectly group-calibrated on the held-out test set, and the resulting classifier need not satisfy exact predictive parity there. Nevertheless, the induced violations are small: the mean PPV gap $|\ppv^0(R)-\ppv^1(R)|$ is $4.8\times10^{-3}$, and the mean FOR gap $|\for^0(R)-\for^1(R)|$ is $3.7\times10^{-3}$. The post-processed classifier attains accuracy $0.8167$, compared with $0.8190$ for the unconstrained classifier.

\section{Conclusion}
Unlike independence and separation, sufficiency is not generally preserved under post-processing. This is a major concern when predicted probabilities are converted into binary classifications or decisions: even if the underlying predictions satisfy perfect sufficiency, applying a simple threshold may result in unfair decisions. In this work, we give an exact solution for optimal binary classification under sufficiency, assuming finite group-calibrated scores. We provide a geometric characterization of the feasible pairs of positive predictive value (PPV) and false omission rate (FOR) attainable by such classifiers, and use it to derive a simple post-processing algorithm to attain the optimal classifier with respect to various objectives. Importantly, once group-calibrated scores and group membership are available, our method produces fair decision rules without access to raw features, additional labels, or retraining a predictor.

The assumption of group-calibrated scores is weaker than assuming access to the true conditional label probabilities. In practice, such scores may be obtained by standard post-hoc calibration methods \citep[see, e.g.,][]{niculescu2005predicting,guo2017calibration} or through stronger notions such as multicalibration~\citep{hebert2018multicalibration}. The finite-score setting is also natural in light of the connection between distribution-free calibration and discretization into finitely many score bins~\citep{gupta2020distribution}. Since calibration is only approximate in finite samples, we provide a robustness analysis for calibration error (Appendix~\ref{app:calibration}); empirically, the induced violations of predictive parity remain small, and often much smaller than the worst-case bound.

Finally, the geometric view highlights a limitation of exact sufficiency for binary classification with multiple protected groups. A sufficient classifier exists only when all the group-specific feasible regions have a nonempty intersection. As the number of groups increases, this requirement becomes more restrictive, and exact feasibility may fail more often in practice. This suggests that relaxing the strict sufficiency criterion and seeking group-specific rules that minimize the differences of PPV and FOR across more than two groups, even when those quantities cannot achieve equality, may be a promising direction for future work.

\section*{Acknowledgments}
This work was supported in part by ERC grant 101125913, Simons Foundation Collaboration 733792, Apple, and a grant from the Israeli Council of Higher Education. Views and opinions expressed are however those of the authors only and do not necessarily reflect those of the European Union or the European Research Council Executive Agency. Neither the European Union nor the granting authority can be held responsible for them.

\section*{Impact Statement}
This paper presents work whose goal is to advance fairness in data-driven decision-making. While there are many aspects of just decision-making that are not covered by this work, by giving an algorithm to find the optimal (randomized) binary classifier that satisfies sufficiency on the basis of a calibrated score, we give decision-makers new tools to strike better balances between accuracy and other societal desiderata.

\bibliography{references.bib}
\bibliographystyle{icml2026}

\newpage
\appendix
\onecolumn
\section{Additional Derivations}
\subsection{Mixture $\for$}\label{app:q'}
Using \eqref{eq:pi}, we have
\begin{align*}
    (1-\mu)\for(R^\eta) &= \pi - \mu\ppv(R^\eta) \\
    &= \pi - \mu\big(\eta p + (1-\eta)\pi\big) \\
    &= (1-\mu)\pi - \eta\mu p + \eta\mu\pi \\
    &\underset{*}{=} (1-\mu)\pi - \eta\big(\pi - (1-\mu)q\big) + \eta\mu\pi \\
    &= (1-\mu)\pi - \eta(1-\mu)\pi + \eta(1-\mu)q \\
    &= (1-\eta)(1-\mu)\pi + \eta(1-\mu)q,
\end{align*}
where $(*)$ follows from~\eqref{eq:pi} applied to $R$, namely
$\mu p = \pi - (1-\mu)q$.
Dividing both sides by $(1-\mu)$ yields
\begin{equation*}
    \for(R^\eta) = \eta q + (1-\eta)\pi.
\end{equation*}

\subsection{Continuity of $\ptop(\mu)$}\label{app:continuity}
To prove the continuity of $\ptop(\mu)$ on the entire interval $(0, 1)$, it suffices to verify right continuity at the left endpoints of the subintervals $I_k=(\mu_{k-1}, \mu_k]$ for $k>1$. Let $\mu\in I_{k+1}$ for some $k<m$, then by \eqref{eq:ptop},
\begin{align*}
    \ptop(\mu) &= s_{k+1} + \frac{1}{\mu}\sum\nolimits_{i\leq k} P(s_i)(s_i - s_{k+1}) \\
    &= s_{k+1} + \frac{1}{\mu}\sum\nolimits_{i\leq k} P(s_i)(s_k - s_{k+1}) + \frac{1}{\mu}\sum\nolimits_{i\leq k} P(s_i)(s_i - s_k) \\
    &= s_{k+1} + \frac{\mu_k}{\mu}(s_k - s_{k+1}) + \frac{c_k}{\mu}.
\end{align*}
Consequently,
\begin{equation*}
    \ptop(\mu) \;\xrightarrow[\mu\to\mu_k^+]{}\; s_k + \frac{c_k}{\mu_k} = \ptop(\mu_k) .
\end{equation*}

\subsection{Formula of $p(\mu_k)$}\label{app:pk}
Using \eqref{eq:ptop} and the definition $\mu_k:=\sum\nolimits_{i\leq k}P(s_i)$, we get for all $k\in[m]$
\begin{align*}
    p_k := \ptop(\mu_k) &= s_k + \frac{c_k}{\mu_k} \\
    &= s_k + \frac{\sum\nolimits_{i<k}P(s_i)(s_i-s_k)}{\sum\nolimits_{i\leq k} P(s_i)} \\
    &= \frac{\sum\nolimits_{i<k}s_i P(s_i) + s_k\left( \sum\nolimits_{i\leq k} P(s_i) - \sum\nolimits_{i< k} P(s_i) \right)}{\sum\nolimits_{i\leq k} P(s_i)} \\
    &= \frac{\sum\nolimits_{i\leq k} s_i P(s_i)}{\sum\nolimits_{i\leq k} P(s_i)} .
\end{align*}

\subsection{Formula for $q(p)$}\label{app:q(p)formula}
Let $\mu\in I_k$ with $k>1$. By \eqref{eq:ptop}, $p=p(\mu)=s_k + \frac{c_k}{\mu} > s_k$, since $c_k>0$. Inverting this relation yields
\begin{equation}\label{eq:mu(p)}
    \mu = \frac{c_k}{p - s_k}.
\end{equation}

Now, substituting into \eqref{eq:pi} gives
\begin{equation*}
    \pi = \mu p + (1-\mu)q
    = \frac{c_k p + (p - s_k - c_k)q}{p - s_k},
\end{equation*}
so that
\begin{equation*}
    (p - s_k - c_k) q = (\pi - c_k)p - s_k \pi.
\end{equation*}
Therefore,
\begin{equation*}
    q = \frac{(\pi - c_k)p - s_k \pi}{p - s_k - c_k},
\end{equation*}
which is well defined, since $\mu<1$ implies $p - s_k > c_k$ by \eqref{eq:mu(p)}.

\section{Proof of Theorem~\ref{thm:ttop}}\label{app:ttop}
{\renewcommand{\thetheorem}{\ref{thm:ttop}}
\begin{theorem}[Restated]
    Let $0<\mu<1$ and define $k^*=k^*(\mu):=\min\{\,k\mid \sum_{i\leq k}P(s_i)\geq\mu\,\}$. Then the pair $(\ptop(\mu), \qbottom(\mu))$ is attained by $R^*=R^*(S;\mu)$, defined according to
    \begin{equation}
        P(R^*=1|s_i) = \begin{cases}
            1 & i<k^* ,\\
            \dfrac{1}{P(s_{k^*})}\left( \mu - \sum\limits_{j<k^*}P(s_j) \right) & i=k^* ,\\
            0 & i>k^* .
        \end{cases}
    \end{equation}
\end{theorem}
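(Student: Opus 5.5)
We plan to view the problem, for fixed $\mu$, as a linear program in the selection probabilities $x_i:=P(R=1|s_i)\in[0,1]$. Feasibility with $\mu$ means exactly the constraint \eqref{eq:mu}, $\sum_i P(s_i)x_i=\mu$. By \eqref{eq:mup}, $p=\frac1\mu\sum_i s_iP(s_i)x_i$, so maximizing $p$ amounts to maximizing $\sum_i s_i P(s_i)x_i$. Since $\mu$ is fixed, \eqref{eq:pi} gives $q=(\pi-\mu p)/(1-\mu)$, which is strictly decreasing in $p$. Hence a single selection rule that maximizes $p$ among rules with selection rate $\mu$ simultaneously attains $\ptop(\mu)$ and $\qbottom(\mu)$. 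This reduces the theorem to showing that $R^*$ solves this fractional knapsack problem. Here item $i$ has weight $w_i=P(s_i)$, value $s_iP(s_i)$, and value-per-weight $s_i$; capacity is $\mu$, and the constraint is an equality.

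First, we check that $R^*$ is well defined and feasible. By the definition of $k^*$, we have $\sum_{j<k^*}P(s_j)<\mu\le\sum_{j\le k^*}P(s_j)$. Therefore the middle value in \eqref{eq:ttop} lies in $(0,1]$, and summing gives $P(R^*=1)=\mu$. We also note that $k^*$ exists since $\mu<1=\sum_{i\le m}P(s_i)$. We may assume $P(s_i)>0$ for all $i$, since these are the support points.

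Second, we prove optimality by an exchange argument with a threshold multiplier, and we expect this to be the main step. Let $x$ be any feasible rule and write $x^*$ for the rule of $R^*$. We consider
\[
\sum_i s_iP(s_i)(x^*_i-x_i)=\sum_i (s_i-s_{k^*})P(s_i)(x^*_i-x_i),
\]
where the equality uses $\sum_i P(s_i)(x^*_i-x_i)=\mu-\mu=0$. Each term is nonnegative. For $i<k^*$ we have $s_i>s_{k^*}$ and $x^*_i=1\ge x_i$. For $i>k^*$ we have $s_i<s_{k^*}$ and $x^*_i=0\le x_i$. For $i=k^*$ the factor $s_i-s_{k^*}$ vanishes. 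Hence $\mu\,\ppv(R^*)\ge\mu\,\ppv(R)$ for every $R$ with $P(R=1)=\mu$.

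Finally, combining these steps shows that $\ppv(R^*)=\ptop(\mu)$, so the supremum is attained. Moreover, by the monotone relation $q=(\pi-\mu p)/(1-\mu)$, the infimum $\qbottom(\mu)$ is attained by the same classifier. This yields $(\ptop(\mu),\qbottom(\mu))\in\CC_\mu$ via $R^*$. One subtlety to remark on is that the definitions of $\ptop$ and $\qbottom$ restrict to $p\ge\pi$ and $q\le\pi$. This is harmless: averaging over the rule $x_i=\mu$ gives $p=\pi$ by \eqref{eq:ES}, so the maximal $p$ is at least $\pi$, and correspondingly the minimal $q$ is at most $\pi$.
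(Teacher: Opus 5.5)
Your proof is correct and follows the paper's route. Both reduce maximizing $p$ at fixed $\mu$ to a fractional knapsack with value-per-weight $s_i$, then use the strictly decreasing relation $q=(\pi-\mu p)/(1-\mu)$ from \eqref{eq:pi} to obtain $\qbottom(\mu)$ from the same rule. The only difference is that the paper cites the optimality of Dantzig's greedy solution, whereas you prove it directly via the exchange identity $\sum_i s_iP(s_i)(x^*_i-x_i)=\sum_i(s_i-s_{k^*})P(s_i)(x^*_i-x_i)\ge 0$; you also verify well-definedness and the harmlessness of the $p\ge\pi$ restriction, which the paper leaves implicit.
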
}
\begin{proof}
    From Equations~\eqref{eq:mu} and \eqref{eq:mup} we note that the problem of finding a selection rule that maximizes $p$ for a given $\mu$ is equivalent to the fractional knapsack problem with the constraints $\sum\nolimits_{i\in[m]} x_i\, P(s_i) = \mu$ and $x_i\in[0,1]$ for all $i\in[m]$, and the maximization objective $\sum\nolimits_{i\in[m]}x_i\,v_i$, where the item values are given by $v_i := s_i\,P(s_i)$. The selection rule $P(R^*|s_i)$ is exactly the greedy algorithm's solution to this problem~\cite{dantzig1957discrete}, as we assumed the scores are in descending order. Finally, from \eqref{eq:pi}, it follows that for any fixed $0<\mu<1$, $q$ is a strictly decreasing function of $p$, meaning that $\qbottom(\mu)$ must be attained together with $\ptop(\mu)$.
\end{proof}

\section{Proof of Theorem~\ref{thm:q(p)}}\label{app:q(p)}
{\renewcommand{\thetheorem}{\ref{thm:q(p)}}
\begin{theorem}[restated]
    The closure of the nontrivial boundary curve of $\CC$ consists of a continuous nondecreasing curve $q(p)$, defined piecewise by~\eqref{eq:q(p)} on each interval $J_k\subset[\pi,\smax)$, together with a vertical segment at $(\smax,q)$ for $q\in[q(\smax^-),\pi]$.
\end{theorem}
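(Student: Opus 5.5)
The plan is to parametrize the nontrivial boundary by the selection rate, using $\mu\mapsto(\ptop(\mu),\qbottom(\mu))$ for $\mu\in(0,1)$. By Theorem~\ref{thm:ttop}, each of these points is attained. By Equation~\eqref{eq:pi}, $\qbottom(\mu)=(\pi-\mu\ptop(\mu))/(1-\mu)$. I will split $(0,1)$ into $I_1=(0,\mu_1]$ and $(\mu_1,1)$, and treat the two pieces separately.

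\emph{The piece $I_1$.} Here $\ptop\equiv\smax$, so $\qbottom(\mu)=(\pi-\mu\smax)/(1-\mu)$. Its derivative has the sign of $\pi-\smax<0$, so it is continuous and strictly decreasing. As $\mu\to0^+$ it tends to $\pi$. At $\mu=\mu_1$ it equals $\qbottom(\mu_1)$, which coincides with the left limit $q(\smax^-)$ of the curve on $J_2$, because $\ptop$ is continuous at $\mu_1$ (Appendix~\ref{app:continuity}). The image of $I_1$ is therefore the vertical segment $\{\smax\}\times[q(\smax^-),\pi)$. Taking the closure adds the endpoint $(\smax,\pi)$.

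\emph{The piece $(\mu_1,1)$.} Here $\ptop$ is continuous and strictly decreasing, and it maps $I_k$ bijectively onto $J_k$, with $p_m=\pi$. So the map $\mu\mapsto\ptop(\mu)$ can be inverted on each $I_k$ as $\mu(p)=c_k/(p-s_k)$, which is Equation~\eqref{eq:mu(p)}. Substituting into \eqref{eq:pi} gives Equation~\eqref{eq:q(p)} on $J_k$, as in Appendix~\ref{app:q(p)formula}. Hence this part of the boundary is exactly the graph of $q(p)$ over $(\pi,\smax)$. Continuity of $q$ comes from composing continuous maps: $\mu(p)$ is the inverse of a continuous strictly monotone function, and $\mu\mapsto\qbottom(\mu)$ is continuous by \eqref{eq:pi} and the continuity of $\ptop$. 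Taking the closure adds the endpoint at $p=\pi$ (the limit $\mu\to1$, where $q\to\smin$ as $\mu\to 1$; indeed on $I_m$ we get $q=\smin$ exactly).

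\emph{Monotonicity, the main step.} I need $\qbottom$ to be nonincreasing in $\mu$. Since $p=\ptop(\mu)$ is decreasing in $\mu$, this makes $q(p)$ nondecreasing in $p$. I see two routes.
\begin{itemize}
\item A direct computation on each $I_k$: with $\mu\ptop(\mu)=s_k\mu+c_k$,
\[
\qbottom(\mu)=\frac{\pi-c_k-s_k\mu}{1-\mu},\qquad \frac{d\qbottom}{d\mu}=\frac{\pi-c_k-s_k}{(1-\mu)^2}.
\]
It then remains to check that $\pi-c_k-s_k\le0$.
\item A cleaner argument: $1-\mu$ is the mass assigned to $R^*=0$, which consists of the lowest scores. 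Increasing $\mu$ removes mass from the top of the rejected set, so the rejected average $\qbottom$ can only decrease. On $I_m$ it is constant at $\smin$.
\end{itemize}
To verify the sign in the first route, I will use $\pi=\sum_i P(s_i)s_i$. This gives
\[
\pi-s_k-c_k=\sum_{i\ge k}P(s_i)(s_i-s_k)-\Big(1-\sum_{i<k}P(s_i)\Big)s_k+\dots
\]
After simplification it should reduce to $\sum_{i>k}P(s_i)(s_i-s_k)\le0$.

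Combining the three pieces gives the claimed curve together with the vertical segment.
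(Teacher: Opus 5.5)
Your proposal is correct and follows essentially the same route as the paper: you split $(0,1)$ into $I_1$, which gives the vertical segment at $\smax$, and the intervals $I_k$ with $k>1$, and you reduce monotonicity to the sign of $\pi-s_k-c_k=\sum_{i\ge k}P(s_i)(s_i-s_k)$. The differences are cosmetic: you differentiate $\qbottom$ in $\mu$ rather than $q$ in $p$ (equivalent by the chain rule through $\mu=c_k/(p-s_k)$), and you identify $q(\smax^-)=\qbottom(\mu_1)$ via continuity of $\ptop$ at $\mu_1$ instead of evaluating the hyperbola formula on $J_2$ directly. Your displayed intermediate step for $\pi-s_k-c_k$ is garbled (it should read $\sum_{i\ge k}P(s_i)s_i-s_k\sum_{i\ge k}P(s_i)$), but the identity you aim for is the right one, and your tail-average argument also establishes the monotonicity independently.
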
}
\begin{proof}
    Let $(p, q)$ lie on the nontrivial boundary of $\CC$. Then there exists $\mu\in(0,1)$ such that $(p, q)=(\ptop(\mu), \qbottom(\mu))$.
    
    If $\mu\in I_1=(0, \mu_1]$, then $c_1=0$ and \eqref{eq:ptop} gives $p=s_1=\smax$. Substituting into \eqref{eq:pi} and rearranging yields
    \begin{equation*}
        q = \qbottom(\mu) = \frac{\pi - \mu\smax}{1-\mu}.
    \end{equation*}
    Differentiating with respect to $\mu$ gives
    \begin{equation*}
        \frac{d}{d\mu}q = \frac{-\smax(1-\mu) + \pi - \mu\smax}{(1-\mu)^2}
        = \frac{\pi-\smax}{(1-\mu)^2}.
    \end{equation*}
    Since $m>1$ and $\E[S]=\pi$ by \eqref{eq:ES}, we have $\smax>\pi$, and thus $q$ is strictly decreasing in $\mu$ on $I_1$. At the endpoints,
    \begin{align*}
        \qbottom(0^+) &= \frac{\pi - 0\cdot\smax}{1-0} = \pi, &
        \qbottom(\mu_1) &= \frac{\pi - P(s_1)\,\smax}{1-P(s_1)} =: q_1,
    \end{align*}
    where $q_1<\pi$ since $P(s_1)>0$ and $\smax>\pi$. Hence the closure of the nontrivial boundary includes the vertical segment $(\smax, q)$ for $q\in[q_1, \pi]$.

    If $\mu\in I_k$ for some $k>1$, then $p\in J_k$ by construction, and \eqref{eq:q(p)} gives
    \begin{equation*}
        q = q(p) = \frac{(\pi-c_k)p - s_k\pi}{p - s_k - c_k} .
    \end{equation*}
    Differentiating with respect to $p$ yields
    \begin{equation}\label{eq:dq/dp}
        \frac{d}{dp}q(p) = \frac{(\pi - c_k)(p-s_k-c_k) - (\pi-c_k)p + s_k\pi}{(p-s_k-c_k)^2}
        = \frac{-c_k(\pi-s_k -c_k)}{(p-s_k-c_k)^2},
    \end{equation}
    so the sign of $\frac{d}{dp}q(p)$ on $J_k$ is determined by $\pi-s_k-c_k$. Now, for all $k\in[m]$,
    \begin{align}
        \pi - s_k - c_k &= \sum\nolimits_{i\in[m]} s_i P(s_i) - s_k -\sum\nolimits_{i<k} P(s_i)(s_i - s_k) \nonumber \\
        &= \sum\nolimits_{i\in[m]} P(s_i)(s_i - s_k) -\sum\nolimits_{i<k} P(s_i)(s_i - s_k) \nonumber \\
        &= \sum\nolimits_{i\geq k} P(s_i)(s_i - s_k), \label{eq:pi-s-c}
    \end{align}
    and since $s_i < s_k$ for all $i>k$, it follows that $\pi - s_k - c_k < 0$ for all $k<m$, while $\pi - s_m - c_m = 0$. Because $c_k>0$ for $k>1$, \eqref{eq:dq/dp} implies that $q(p)$ is strictly increasing on $J_k$ for $1<k<m$, and constant on $J_m$.
    
    In particular, $c_m = \pi - s_m$, so for $p\in J_m$,
    \begin{equation*}
        q(p) = \frac{p(\pi - \pi + s_m) - s_m\pi}{p - s_m - \pi + s_m}
        = \frac{s_m(p - \pi)}{p-\pi} = s_m = \smin.
    \end{equation*}
    Thus, although \eqref{eq:q(p)} is not defined at the left endpoint $p=\pi$, $q(p)$ extends continuously there by setting $q(\pi):=\smin$.

    Since the intervals $\{J_k\}$ cover $(\pi,\smax)$ and $q(p)$ matches continuously at their endpoints (by continuity of $\ptop$ in $\mu$ and \eqref{eq:pi}), we conclude that $q(p)$ is nondecreasing on $[\pi,\smax)$ and attains its minimum $\smin$ on the closure of $J_m$.

    Finally, we verify that the graph of $q(p)$ on $[\pi, \smax)$ connects continuously to the vertical segment at $p=\smax$. For $p$ sufficiently close to $\smax$ from the left, we have $p\in J_2=[p_2, \smax)$. On this interval, $c_2=P(s_1)(s_1-s_2)= P(s_1)(\smax-s_2)$, and hence by \eqref{eq:q(p)},
    \begin{align*}
        q(\smax^-) &= \frac{\smax(\pi-c_2)-s_2\pi}{\smax - s_2 - c_2} \\
        &= \frac{\pi(\smax-s_2) - \smax P(s_1)(\smax-s_2)}{(\smax - s_2) - P(s_1)(\smax-s_2)} \\
        &= \frac{\pi - P(s_1)\,\smax}{1-P(s_1)} = q_1 ,
    \end{align*}
    which is the lower endpoint of the vertical segment identified above.
\end{proof}

\section{Degenerate Cases}\label{app:degenerate}
Throughout this paper we restrict attention to nonconstant binary classifiers $R$, so their selection rate $\mu := P(R=1)$ lies strictly between $0$ and $1$. When $R$ is constant, at least one of $\ppv(R) := P(Y=1 | R=1)$ or $\for(R) := P(Y=1 | R=0)$ is undefined. The same issue arises at the subgroup level: if $R$ is constant within a group $a$, then the corresponding $\ppv^a(R)$ or $\for^a(R)$ is undefined, and predictive parity cannot be stated for that group.

By contrast, the fairness criterion of sufficiency requires the conditional independence of $Y$ and $A$ given $R$. When both $Y$ and $R$ are binary and $R$ is nonconstant, this condition is equivalent to predictive parity, but it remains well defined even in degenerate cases where $R$ is constant (either overall or within a subgroup), since it constrains the distribution of $Y$ only on $(A,R)$ pairs with positive probability. Concretely, sufficiency requires
\begin{equation}\label{eq:sufficiency}
    P(Y=1|R=r,A=a)=P(Y=1|R=r)
\end{equation}
for all $r$ and $a$ such that $P(R=r,A=a)>0$.

While the utility of constant classification is debatable, one could still consider constant
behavior on part of the population, provided the fairness constraint is satisfied. We now
briefly characterize how such degenerate cases fit into our framework.

First, we extend the definition of feasibility to include constant classifiers:
\begin{definition}[Feasibility with degenerate classifiers]\label{def:feasibility++}
    Given a joint distribution $P(Y,S)$, we say that a pair $(p,q) \in [0,1]^2$ is feasible if there exists a randomized binary classifier $R = R(S)$ with selection rate $\mu := P(R=1) \in [0,1]$, such that $P(Y=1, R=1)=\mu p$ and $P(Y=1, R=0)=(1-\mu)q$.
\end{definition}
Note that when $0 < \mu < 1$, the definition above reduces to Definition~\ref{def:feasibility}, since $\ppv(R) = \frac{1}{\mu}P(Y=1,R=1)$ and $\for(R)=\frac{1}{1-\mu}P(Y=1,R=0)$.

The identity $\pi=\mu p + (1-\mu)q$ \eqref{eq:pi} remains valid even in the extreme cases $\mu \in \{0,1\}$. Therefore, relying on the standing convention that $q \leq \pi \leq p$, we obtain that if $\mu = 0$, then $q = \pi$ and $p \in [\pi, 1]$ is arbitrary, whereas if $\mu = 1$, then $p = \pi$ and $q \in [0, \pi]$ is arbitrary. Geometrically, this implies that allowing degenerate classifiers adds two boundary segments to the feasible region, extending horizontally and vertically from the point $(\pi,\pi)$.

If $m=1$, meaning that $S$ is constant, then any classifier $R=R(S)$ is constant as well, and the feasible region consists only of these degenerate edges. If $m>1$, however, Theorem~\ref{thm:q(p)} guarantees the feasibility of the point $(\smax, q(\smax^-))$, where $\smax > \pi$ and $q(\smax^-) = \frac{\pi - P(s_1)\smax}{1-P(s_1)} < \pi$ (see Appendix~\ref{app:q(p)}). This point does not lie on either degenerate segment, and in fact the structure of $\partial \CC$ described in Theorem~\ref{thm:q(p)} implies that $\CC$ has a nonempty interior.

When considering sufficiency with degenerate classifiers, we must therefore include the degenerate edges in the intersection analysis. For example, using the notation of Section~\ref{sec:multigroup}, suppose that $(p,q) \in \CCC$ lies on the horizontal degenerate edge of $\CC^0$ but in the interior of $\CC^1$, so that $\mu^0 = 0$ and $0 < \mu^1 < 1$. Then $P(R=1, A=0)=0$, but conditioning on $R=0$ is valid, and by Definition~\ref{def:feasibility++} we have
\begin{align*}
    P(Y=1|R=0,A=0) &= \frac{P(Y=1,R=0|A=0)}{P(R=0|A=0)} = \frac{(1-\mu^0)q}{1-\mu^0}=q, \\
    P(Y=1|R=0,A=1) &= \frac{P(Y=1,R=0|A=1)}{P(R=0|A=1)} = \frac{(1-\mu^1)q}{1-\mu^1}=q.
\end{align*}
Hence $P(Y=1|R=0,A=a)=P(Y=1|R=0)=q$ for both groups, and sufficiency is satisfied by \eqref{eq:sufficiency}.

By \eqref{eq:bounds}, an intersection between the nontrivial part of $\CC^a$ and a degenerate edge of $\CC^b$ occurs if $\smin^a \leq \pi^b\leq \smax^a$ and $\pi^a\neq \pi^b$. For instance, if $\smin^1\leq\pi^0<\pi^1$, then the horizontal degenerate edge of $\CC^0$ intersects $\CC^1$ along the segment $(p, \pi^0)$ for $p\in[\pi^1, \pmax^1]$, where $\pmax^1$ is defined as in \eqref{eq:pmax}.

Finally, note that once degenerate classifiers are allowed, the intersection $\CCC$ is always nonempty, albeit in a trivial way. Assuming $\pi^0 < \pi^1$, the point $(\pi^1,\pi^0)$ lies simultaneously on the horizontal degenerate edge of $\CC^0$ and the vertical degenerate edge of $\CC^1$. The classifier attaining this point is $R=A$, which simply predicts the subgroup identity. Interpreted probabilistically, this corresponds to predicting the subgroup base rate for every individual.

\section{Computing $\pmax$ and $\qmin$}\label{app:pmax}

Assume without loss of generality that $\pi^0\leq\pi^1$ and that $\CCC\neq\varnothing$. Then \eqref{eq:intersectionbounds} implies that $\smin^1<\pi^0$ and $\pi^1<\smax^0$. By Theorem~\ref{thm:q(p)}, the endpoints of the boundary curve of $\CC^1$ are $(\pi^1, \smin^1)$ and $(\smax^1, \pi^1)$, hence it must cross (or at least attain) the value $q=\pi^0$ at some $p\in[\pi^1, \smax^1]$.

Define $\pmax^1$ according to \eqref{eq:pmax}, namely
\begin{equation*}
    \pmax^1 := \max \{p\in[\pi^1, \smax^1] \mid q^1(p)\leq\pi^0\} .
\end{equation*}
This quantity is well defined by the argument above. Moreover, any $(p, q)\in\CC^1$ with $p>\pmax^1$ must satisfy $q>\pi^0$, and therefore, by \eqref{eq:intersectionbounds}, does not belong to the intersection $\CCC$. Consequently, for all $(p,q)\in\CCC$ we have $p\leq\pmax:=\min\{\smax^0, \pmax^1\}$, with equality attained on $\partial\CC^0$ if $\pmax=\smax^0$, and on $\partial\CC^1$ if $\pmax=\pmax^1$.

We next provide an algorithmic way to compute $\pmax^1$, based on the piecewise characterization of $\partial\CC^1$. Recall that the boundary curve of $\CC^1$ consists of a constant segment
$q^1(p)\equiv\smin^1$ on $J_{m^1}$, a strictly increasing part on the intervals
$J_k$ for $1<k<m^1$, and a vertical segment at $p=p^1_1=\smax^1$. Accordingly, if $\pi^0\geq q^1(p^1_1)$ then $\pmax^1=\smax^1$; otherwise, $\pmax^1$ lies in the strictly increasing part and satisfies $q^1(\pmax^1)=\pi^0$.

To locate this value, consider the partition of the $q$-axis induced by $\{J_k\}$. Define $q_k:=q(p_k)$, and since $p_k=\ptop(\mu_k)$, we obtain from \eqref{eq:pi}
\begin{align*}
    q_k:=q(p_k)&= \frac{\pi - \mu_k p_k}{1-\mu_k} \\
    &=\frac{\sum\nolimits_{i\in[m]} s_i P(s_i) - \sum\nolimits_{i\leq k}s_i P(s_i)}{1-\sum\nolimits_{i\leq k}P(s_i)} \\
    &= \frac{\sum\nolimits_{i>k}s_i P(s_i)}{\sum\nolimits_{i>k}P(s_i)}.
\end{align*}
In particular, $q^1_{m^1-1}=s^1_{m^1}=\smin^1$, so the intervals $J_k$ for $1<k<m^1$ induce a partition of $q\in[\smin^1, q^1(p^1_1))$.

Let $k^*:=\min\{1\leq k<m^1 \mid q^1_k\leq \pi^0\}$. If $k^*=1$, then $\pi^0\geq q^1(p^1_1)$ and $\pmax^1=\smax^1$. Otherwise, $\pi^0\in[q^1_k, q^1_{k-1})$, hence $\pmax^1\in J^1_k$. On this interval, the boundary is given by \eqref{eq:q(p)}, so solving $q^1(p)=\pi^0$ yields
\begin{equation*}
    \pmax^1 = \frac{(s^1_{k^*}+c^1_{k^*})\pi^0 - s^1_{k^*}\pi^1}{\pi^0-\pi^1+c^1_{k^*}}.
\end{equation*}
Note that the denominator is nonzero: if $\pi^0-\pi^1+c^1_k=0$, then the equation $q^1(p)=\pi^0$ forces $s^1_k=\pi^0$ and hence $s^1_k-\pi^1+c^1_k=0$, contradicting \eqref{eq:pi-s-c} in Appendix~\ref{app:q(p)} for $k<m^1$.

Finally, by symmetry, the boundary curve of $\CC^0$ crosses (or attains) the value $p=\pi^1$ at $\qmin^0:=q^0(\pi^1)$, and all $(p, q)\in\CCC$ satisfy $q\geq\max\{\qmin^0, \smin^1\}$. This specific value is not directly used by Algorithm~\ref{alg:boundary}, which parametrizes the boundary by $p$, but may be of interest when computing quantities (for example, a loss function) at the endpoints of the boundary curve of $\CCC$.

Let $k^*:=\min\{ 1<k\leq m^0 \mid p^0_k\leq \pi^1\}$. If $k^*=m^0$, then $\pi^1\in J^0_{m^0}$ and $\qmin^0=\smin^0$. Otherwise, $\pi^1\in J_k$ for $1<k<m^0$ and $\qmin^0$ is obtained by evaluating \eqref{eq:q(p)} at $p=\pi^1$.

The computation of $\pmax$ and $\qmin$ is summarized in Algorithm~\ref{alg:pmax}.

\begin{algorithm}[tb]
   \caption{\textbf{Compute $\pmax$ and $\qmin$.}}
   \label{alg:pmax}
\begin{algorithmic}
    \STATE \hspace*{-1em} {\bfseries Input:} Scores and weights $(s^a_i, P(s^a_i|A=a))$ for each of the groups $a\in\{0,1\}$, with scores in descending order
    \STATE \hspace*{-1em} {\bfseries Output:} $\pmax,\;\qmin$
    
    \medskip
    
    \STATE $a \gets \arg\!\max_{a\in\{0, 1\}}\pi^a$

    \medskip
    
    \STATE $k \gets \min\{k\in\{1, \dots, m^a-1\} \,:\, q^a_k\leq \pi^{1-a}\}$
    \IF{$k=1$}
        \STATE $\pmax^a \gets \smax^a$
    \ELSE
        \STATE $\pmax^a \gets \dfrac{(s^a_k+c^a_k)\pi^{1-a} - s^a_k \pi^a}{\pi^{1-a} - \pi^a + c^a_k}$
    \ENDIF
    \STATE $\pmax \gets \min\{\pmax^a,\;\smax^{1-a}\}$

    \medskip
    
    \STATE $k \gets \min\{k\in\{2, \dots, m^{1-a}\} \,:\, p^{1-a}_k\leq \pi^a\}$
    \IF{$k=m^{1-a}$}
        \STATE $\qmin^{1-a} \gets \smin^{1-a}$
    \ELSE
        \STATE $\qmin^{1-a} \gets \dfrac{(\pi^{1-a}-c^{1-a}_k)\pi^a - s^{1-a}_k \pi^{1-a}}{\pi^a - s^{1-a}_k - c^{1-a}_k}$
    \ENDIF
    \STATE $\qmin \gets \max\{\qmin^{1-a},\;\smin^a\}$

    \medskip

    \STATE {\bfseries return} $\pmax,\;\qmin$
\end{algorithmic}
\end{algorithm}

\section{Active Boundary Check}\label{app:active}
Let $p\in J^0_k\cap J^1_l$ for some $1<k\leq m^0$ and $1< l \leq m^1$. Then by \eqref{eq:q(p)}, $q^0(p)\geq q^1(p)$ iff
\begin{equation*}
    \frac{(\pi^0-c^0_k)p - s^0_k\pi^0}{p-s^0_k-c^0_k} \geq \frac{(\pi^1-c^1_l)p - s^1_l\pi^1}{p-s^1_l-c^1_l}.
\end{equation*}
Rearranging the terms, this condition is equivalent to
\begin{equation*}
    \Phi_{k,l}(p) = A_{k,l}p^2 + B_{k,l}p + C_{k,l} \geq 0,
\end{equation*}
where
\begin{align*}
    A_{k,l} &:= (\pi^0-c^0_k) - (\pi^1-c^1_l), \\
    B_{k,l} &:= s^1_l\pi^1 - s^0_k\pi^0 + (s^0_k+c^0_k)(\pi^1-c^1_l) - (s^1_l+c^1_l)(\pi^0-c^0_k), \\
    C_{k,l} &:= (s^1_l+c^1_l)s^0_k\pi^0 - (s^0_k+c^0_k)s^1_l\pi^1 .
\end{align*}

\section{Loss Minimization}\label{app:loss}
Following the assumptions in Section~\ref{sec:loss}, we are given a loss function $\ell:\{0,1\}^2\to\mathbb R$ with $\ell_{00}=\ell_{11}=0$ and $\ell_{01}+\ell_{10}>0$. By \eqref{eq:Elpq}, the expected loss of a classifier $R$ that attains a pair $(\pi, \pi)\neq(p, q)\in\CCC$ is
\begin{equation*}
    L(p,q):=\E[\ell(R,Y)] = \pi\ell_{01} + \frac{\pi - q}{p-q}\big(\ell_{10} - p(\ell_{01} + \ell_{10}) \big),
\end{equation*}
where $\pi:=P(A=0)\pi^0+P(A=1)\pi^1$ is the aggregate base-rate of $Y$. Moreover, \eqref{eq:El} implies that the minimum of $L(p,q)$ is achieved on the boundary of $\CCC$. In this appendix we show how to find the minimizer of $L(p,q)$ using Algorithm~\ref{alg:boundary}.

First, we prove a simple property of $L$ on line segments:
\begin{proposition}\label{prop:linearL}
    Let $(p_1, q_1)$ and $(p_2, q_2)$ be the endpoints of a line segment contained in $\CCC$. If the segment is vertical ($p_1=p_2=p$) or horizontal ($q_1=q_2=q$), then the minimum of $L(p,q)$ on the segment is attained at one of the endpoints.
\end{proposition}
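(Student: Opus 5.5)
We show that on each kind of segment, $L$ restricted to the segment is a monotone function of a single parameter. Then its minimum over the closed segment sits at an endpoint. We work with the closed form
\[
L(p,q)=\pi\ell_{01}+\frac{\pi-q}{p-q}\,g(p),\qquad g(p):=\ell_{10}-p(\ell_{01}+\ell_{10}).
\]
This is valid for $(p,q)\neq(\pi,\pi)$. Points of $\CCC$ other than $(\pi,\pi)$ satisfy $q<\pi<p$, so $p-q>0$ there.

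\emph{Vertical segment.} Fix $p$ and let $q$ vary between $q_1$ and $q_2$. Then $g(p)$ is a constant, and $L$ is an affine function of $h(q):=(\pi-q)/(p-q)$. Differentiating gives
\[
h'(q)=\frac{-(p-q)+(\pi-q)}{(p-q)^2}=\frac{\pi-p}{(p-q)^2}.
\]
This has constant sign along the segment, since $p$ is fixed and $p\ge\pi$. So $h$ is monotone in $q$, and hence so is $L$ (nonincreasing or nondecreasing according to the sign of $g(p)$). The minimum over the closed segment is therefore attained at an endpoint.

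\emph{Horizontal segment.} Fix $q$ and let $p$ vary between $p_1$ and $p_2$. Write
\[
L=\pi\ell_{01}+(\pi-q)\,\frac{\ell_{10}-p(\ell_{01}+\ell_{10})}{p-q}.
\]
The quotient is a linear-fractional (Möbius) function of $p$. Its derivative is
\[
\frac{-(\ell_{01}+\ell_{10})(p-q)-\ell_{10}+p(\ell_{01}+\ell_{10})}{(p-q)^2}=\frac{q(\ell_{01}+\ell_{10})-\ell_{10}}{(p-q)^2}.
\]
The numerator does not depend on $p$. The pole $p=q$ lies outside the segment because $p>\pi\ge q$ there. So the quotient is monotone in $p$ along the segment, $(\pi-q)\ge0$ is a fixed factor, and $L$ is monotone. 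Again the minimum is at an endpoint.

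The only delicate point is the degenerate point $(\pi,\pi)$, where the formula is undefined. On a horizontal segment it can occur only if $q=\pi$, and on a vertical segment only if $p=\pi$.
\begin{itemize}
\item If $q=\pi$, then $\mu=0$ and the factor $(\pi-q)$ vanishes, so $L\equiv\pi\ell_{01}$ is constant on the segment.
\item If $p=\pi$, then $\mu=1$ (reading $\mu$ from \eqref{eq:pi} as the limit $(\pi-q)/(p-q)\to1$), and $L=\ell_{10}(1-\pi)$ is constant on the segment.
\end{itemize}
In both cases the claim holds trivially. Defining $L$ at $(\pi,\pi)$ through \eqref{eq:El} keeps it continuous along the segment, which closes the argument. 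Apart from this edge case, the key step is simply noting that in each case $L$ is a Möbius function of the free variable with no pole on the segment.
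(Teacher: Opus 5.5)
Your proof is correct and follows the paper's route: you differentiate \eqref{eq:Elpq} in the free coordinate and observe that the derivative has constant sign along the segment. On vertical segments it is $g(p)(\pi-p)/(p-q)^2$, and on horizontal ones it is $(\pi-q)\bigl((\ell_{01}+\ell_{10})q-\ell_{10}\bigr)/(p-q)^2$. Your edge-case paragraph about $(\pi,\pi)$ is unnecessary, since points of $\CCC$ with $p=\pi$ or $q=\pi$ other than $(\pi,\pi)$ would require $\mu\in\{0,1\}$, so no nondegenerate horizontal or vertical segment in $\CCC$ touches that point (and at $(\pi,\pi)$ the loss depends on $\mu$, which $(p,q)$ does not determine).
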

\begin{proof}
    Assume first that $p_1=p_2=p$. Differentiating \eqref{eq:Elpq} with respect to $q$ gives
    \begin{align*}
        \frac{dL}{dq}(q)
        &= \big(\ell_{10} - p(\ell_{01} + \ell_{10}) \big)\frac{-(p-q) + (\pi - q)}{(p-q)^2} \\
        &= \big(\ell_{10} - p(\ell_{01} + \ell_{10}) \big)\frac{\pi-p}{(p-q)^2},
    \end{align*}
    whose sign is constant in $q$.

    Conversely, if $q_1=q_2=q$, differentiating with respect to $p$ gives
    \begin{align*}
        \frac{dL}{dp}(p)
        &= (\pi-q)\frac{-(\ell_{01} + \ell_{10})(p-q) -\big(\ell_{10} - p(\ell_{01} + \ell_{10}) \big) }{(p-q)^2} \\
        &= (\pi-q)\frac{(\ell_{01} + \ell_{10})q - \ell_{10}}{(p-q)^2},
    \end{align*}
    whose sign is constant in $p$.
    
    In either case, $L(p,q)$ is monotonic (or constant) along the segment and attains its minimum at an endpoint.
\end{proof}

Let $(p, q)\in\partial(\CCC)$. As shown in Section~\ref{sec:boundary}, either $(p,q)\in J_{k,l,i}$ for some $k, l, i$, or it lies on a vertical segment at $p=\pmax$. Assume that $(p,q)\in J_{k,l,i}$ and, without loss of generality, assume that the active boundary on that interval is $\partial\CC^0$ (otherwise, swap the roles of groups 0 and 1, and interchange $k$ and $l$). Then $q=q^0(p)$ is given by \eqref{eq:q(p)} and we obtain
\begin{equation*}
    L(p, q) = L(p) = \pi\ell_{01} + \frac{D^0_kp+E^0_k}{(p-\pi^0)(p-s^0_k)}\big(\ell_{10} - p(\ell_{01} + \ell_{10}) \big),
\end{equation*}
where
\begin{align*}
    D^0_k &:= \pi - \pi^0 +c^0_k, \\
    E^0_k &:= (\pi^0 - \pi)s^0_k - c^0_k\pi
\end{align*}
and $k$ is the respective index of $J_{k,l,i}$. Differentiating with respect to $p$ yields
\begin{equation*}
    \frac{dL}{dp}(p) = \frac{F^0_kp^2 + G^0_kp + H^0_k}{(p-\pi^0)^2(p-s^0_k)^2},
\end{equation*}
where
\begin{align*}
    F^0_k &:= (\ell_{01}+\ell_{10})\big(D^0_k(\pi^0+s^0_k) + E^0_k\big) - \ell_{10}D^0_k ,\\
    G^0_k &:= -2(\ell_{01}+\ell_{10})D^0_k\pi^0s^0_k-2\ell_{10}E^0_k, \\
    H^0_k &:= \big(\ell_{10}D^0_k-(\ell_{01}+\ell_{10})E^0_k\big)\pi^0s^0_k + \ell_{10}E^0_k(\pi^0 + s^0_k).
\end{align*}
Therefore, the minimum of $L(p,q)$ on $J_{k,l,i}$ is attained at one of the endpoints or at a real root of $\Psi^0_k(p):=F^0_kp^2 + G^0_kp + H^0_k$ in the interior of $J_{k,l,i}$. Since Algorithm~\ref{alg:boundary} iterates over all the intervals $J_{k,l,i}$, it suffices to evaluate $L(p,q)$ at the internal roots of $\Psi^0_k(p)$ and the right endpoint of each interval, as well as at the left endpoint of the leftmost interval, that is $(\max\{\pi^0, \pi^1\}, \qmin)$ (see Appendix~\ref{app:pmax} for the definition of $\qmin$ and an algorithm to compute it).

Finally, if $\pmax=\min\{\smax^0, \smax^1\}$, the boundary of $\CCC$ contains a vertical segment at $p=\pmax$, and it is left to evaluate $L(p,q)$ there. By Proposition~\ref{prop:linearL}, it suffices to evaluate $L(p, q)$ at the endpoints of the vertical segment, namely $(\pmax, q(\pmax^-))$ and $(\pmax, \pi^0)$. The lower endpoint coincides with the right endpoint of the rightmost interval $J_{k,l,i}$, hence it suffices to check $L(\pmax, \pi^0)$.

\section{Minimizing Deviation from Separation}\label{app:deviation}
The deviation from separation of a classifier $R$, using total variation divergence, is defined as~\cite{benger2025mapping}
\begin{align*}
    \dsep(R) &:= \E_{A,Y} \operatorname{TV}\big(P(R|A,Y),\,P(R|Y)\big) \\
    &= \sum_{a\in\{0,1\}}\sum_{y\in\{0,1\}}P(a)P(y|a)\operatorname{TV}\big(P(R|a,y),\,P(R|y)\big),
\end{align*}
where
\begin{align*}
    \operatorname{TV}\big(P(R|a,y),\,P(R|y)\big) &:= \frac{1}{2}\sum_{r\in\{0,1\}}|P(r|a,y) - P(r|y)| \\
    &= |P(R=1|a,y)-P(R=1|y)|.
\end{align*}

Let $(\pi, \pi)\neq(p, q)\in\CCC$ be attained by $R$, then for all $a\in\{0,1\}$, following Bayes' rule,
\begin{align*}
    \tpr(R) &:= P(R=1|Y=1) = \frac{P(R=1)}{P(Y=1)}P(Y=1|R=1) = \frac{\mu}{\pi}p,\\
    \tpr^a(R) &:= P(R=1|Y=1,a)=\frac{P(R=1|a)}{P(Y=1|a)}P(Y=1|R=1,a) = \frac{\mu^a}{\pi^a}p,
\end{align*}
where we used the equality $p^a=p$, since $(p, q)$ belongs to the intersection of the feasible sets. Similarly,
\begin{align*}
    \fpr(R) &:= P(R=1|Y=0) = \frac{P(R=1)}{P(Y=0)}P(Y=0|R=1) = \frac{\mu}{1-\pi}(1-p),\\
    \fpr^a(R) &:= P(R=1|Y=0,a)=\frac{P(R=1|a)}{P(Y=0|a)}P(Y=0|R=1,a) = \frac{\mu^a}{1-\pi^a}(1-p).
\end{align*}
Therefore,
\begin{equation}\label{eq:EYa}
    \E_{Y|a}\operatorname{TV}\big(P(R|Y,a),\,P(R|Y)\big) = \pi^ap\left|\frac{\mu^a}{\pi^a}-\frac{\mu}{\pi}\right| + (1-\pi^a)(1-p)\left|\frac{\mu^a}{1-\pi^a} - \frac{\mu}{1-\pi}\right| .
\end{equation}

By \eqref{eq:pi}, $\mu = \frac{\pi - q}{p - q}$ and $\mu^a = \frac{\pi^a-q}{p-q}$, so
\begin{align*}
    \mu^a = 1-\frac{p-\pi^a}{p-q} =1-\frac{p-\pi^a}{p-\pi}(1-\mu) = \frac{(\pi^a-\pi) + \mu(p-\pi^a)}{p-\pi}.
\end{align*}
Substituting this expression inside the terms of \eqref{eq:EYa} gives
\begin{align*}
    \pi^ap\left|\frac{\mu^a}{\pi^a}-\frac{\mu}{\pi}\right|
    &= \frac{p}{\pi(p-\pi)}\big| (\pi^a-\pi)(\pi - \mu p) \big|, \\
    (1-\pi^a)(1-p)\left|\frac{\mu^a}{1-\pi^a} - \frac{\mu}{1-\pi}\right|
    &= \frac{1-p}{(1-\pi)(p-\pi)}\big| (\pi^a - \pi)\big((1-\pi) - \mu(1-p)\big) \big|.
\end{align*}
Since $\pi-\mu p = (1-\mu)q\geq0$ and $(1-\pi) - \mu(1-p) = (1-\mu)(1-q)\geq0$, these terms can be taken out of the absolute values. Substituting into \eqref{eq:EYa} and rearranging yields
\begin{equation*}
    \E_{Y|a}\operatorname{TV}\big(P(R|Y,a),\,P(R|Y)\big)
    = |\pi^a-\pi|\left( \frac{1-\mu}{p-\pi} - \frac{\mu(p-\pi)}{\pi(1-\pi)} \right).
\end{equation*}
Now, the only term that depends on $a$ is $|\pi^a-\pi|$ and we define
\begin{align*}
    K:=\E_A |\pi^A-\pi| &= P(A=0)|\pi^0-\pi| + P(A=1)|\pi^1-\pi| \\
    &= P(A=0)|\pi^0 - P(A=0)\pi^0 - P(A=1)\pi^1| + P(A=1)|\pi^1 - P(A=0)\pi^0 - P(A=1)\pi^1| \\
    &= P(A=0)P(A=1)|\pi^0-\pi^1| + P(A=1)P(A=0)|\pi^1-\pi^0| \\
    &= 2P(A=0)P(A=1)|\pi^0-\pi^1| \geq 0.
\end{align*}
Therefore,
\begin{equation}\label{eq:dsep}
    \dsep(R) = K \left( \frac{1-\mu}{p-\pi} - \frac{\mu(p-\pi)}{\pi(1-\pi)} \right).
\end{equation}

As shown in Section~\ref{sec:deviation}, differentiating with respect to $p$ (holding $\mu$ fixed) we obtain
\begin{equation*}
    \frac{d}{dp}\dsep(R)
    =K\left( -\frac{1-\mu}{(p-\pi)^2} - \frac{\mu}{\pi(1-\pi)} \right) \leq 0,
\end{equation*}
so for a fixed $\mu$ the deviation from separation is constant or minimized by maximizing $p$, implying that a minimizer is attained on the boundary of $\CCC$.

Finally, we substitute for $\mu=\frac{\pi-q}{p-q}$ in \eqref{eq:dsep}, so
\begin{equation}\label{eq:Dpq}
    \dsep(R)=\dsep(p,q) = \tilde K\frac{\pi(1-p)+q(p-\pi)}{p-q},
\end{equation}
where
\begin{equation*}
    \tilde K:=\frac{K}{\pi(1-\pi)}=\frac{2P(A=0)P(A=1)|\pi^0-\pi^1|}{\pi(1-\pi)}.
\end{equation*}

We now show how the minimizer of $\dsep(R)$ can be found using Algorithm~\ref{alg:boundary}, following the same steps of Appendix~\ref{app:loss} for attaining the loss minimizer.

First, similar to the case with expected loss (see Proposition~\ref{prop:linearL}), $\dsep(R)$ also behaves nicely on line segments:
\begin{proposition}\label{prop:linearDelta}
    Let $(p_1, q_1)$ and $(p_2, q_2)$ be the endpoints of a line segment contained in $\CCC$. If the segment is vertical ($p_1=p_2=p$) or horizontal ($q_1=q_2=q$), then the minimum of $\dsep(p,q)$ on the segment is attained at one of the endpoints.
\end{proposition}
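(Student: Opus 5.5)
We mirror the proof of Proposition~\ref{prop:linearL}, now applied to the closed form \eqref{eq:Dpq},
\begin{equation*}
    \dsep(p,q) = \tilde K\,\frac{\pi(1-p)+q(p-\pi)}{p-q}.
\end{equation*}
The aim is to show that along a vertical or a horizontal segment in $\CCC$ the derivative of $\dsep$ with respect to the varying coordinate has constant sign. Then $\dsep$ is monotone (or constant) on the segment, and its minimum is attained at an endpoint. Two preliminary facts are used throughout. First, $\tilde K\geq 0$ is a constant. Second, every point of the segment other than $(\pi,\pi)$ satisfies $q<\pi<p$ by \eqref{eq:bounds}, so $p-q>0$ and the expression is smooth there. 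If the segment has $(\pi,\pi)$ as an endpoint, we can take limits there; continuity then gives the claim on the closed segment.

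\emph{Vertical case ($p$ fixed).} Write $N(q)=\pi(1-p)+q(p-\pi)$ and $D(q)=p-q$. Then $N'(q)=p-\pi$ and $D'(q)=-1$, so
\begin{equation*}
    \frac{d\dsep}{dq}=\tilde K\,\frac{(p-\pi)(p-q)+\pi(1-p)+q(p-\pi)}{(p-q)^2}
    =\tilde K\,\frac{p(p-\pi)+\pi(1-p)}{(p-q)^2}.
\end{equation*}
The numerator does not depend on $q$, so the sign of the derivative is constant along the segment. In fact it is nonnegative, since $p\ge\pi$ and $p\le1$ make both terms nonnegative.

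\emph{Horizontal case ($q$ fixed).} Now $N'(p)=-\pi+q$ and $D'(p)=1$, so
\begin{equation*}
    \frac{d\dsep}{dp}=\tilde K\,\frac{(q-\pi)(p-q)-\pi(1-p)-q(p-\pi)}{(p-q)^2}.
\end{equation*}
Expanding the numerator, the terms in $p$ are $(q-\pi)p+\pi p-qp=0$, so $p$ cancels entirely. The numerator is therefore the constant $-q^2+\pi q-\pi+\pi q$, i.e. $-(q^2-2\pi q+\pi)$, and the derivative again has constant sign along the segment.

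The only step needing care is this algebraic check that the numerator is free of the varying coordinate, in particular the cancellation of $p$ in the horizontal case. Once it is verified, monotonicity on each segment follows immediately, and the minimum lies at an endpoint exactly as in Proposition~\ref{prop:linearL}.
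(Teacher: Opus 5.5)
Your proof is correct and follows the paper's argument: you differentiate \eqref{eq:Dpq} in the varying coordinate and find numerators $p^2-2\pi p+\pi$ and $-(q^2-2\pi q+\pi)$, which do not depend on that coordinate. One minor correction: your remark about taking limits at $(\pi,\pi)$ is both unnecessary and not quite right. When $\tilde K>0$, \eqref{eq:Dpq} blows up at $(\pi,\pi)$, so the continuity argument does not apply. That case never occurs, though, because then $\pi^0\neq\pi^1$, and \eqref{eq:intersectionbounds} (rather than \eqref{eq:bounds}) gives $q<\min\{\pi^0,\pi^1\}<\pi<\max\{\pi^0,\pi^1\}<p$ on all of $\CCC$.
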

\begin{proof}
    Assume first that $p_1=p_2=p$. Differentiating \eqref{eq:Dpq} with respect to $q$ gives
    \begin{equation*}
        \frac{d\dsep}{dq}(q)
        = \tilde K \frac{p^2-2\pi p + \pi}{(p-q)^2}
    \end{equation*}
    whose sign is constant in $q$.

    Conversely, if $q_1=q_2=q$, differentiating with respect to $p$ gives
    \begin{equation*}
        \frac{d\dsep}{dp}(p)
        = -\tilde K \frac{q^2-2\pi q + \pi}{(p-q)^2}
    \end{equation*}
    whose sign is constant in $p$.
    
    In either case, $\dsep(p,q)$ is monotonic (or constant) along the segment and attains its minimum at an endpoint.
\end{proof}

Let $(p,q)\in\partial(\CCC)$. Assume that $(p,q)\in J_{k,l,i}$ and, without loss of generality, assume that the active boundary on that interval is $\partial\CC^0$. Substituting $q=q^0(p)$ in \eqref{eq:Dpq} we obtain
\begin{equation*}
    \dsep(p,q) = \dsep(p)=\tilde K\frac{\tilde A^0_k p^2+\tilde B^0_k p+\tilde C^0_k}{(p-\pi^0)(p-s^0_k)},
\end{equation*}
where
\begin{align*}
    \tilde A^0_k &:= \pi^0-\pi-c^0_k, \\
    \tilde B^0_k &:= \pi(2c^0_k-\pi^0+s^0_k+1) - s^0_k\pi^0, \\
    \tilde C^0_k &:= \pi(s^0_k\pi^0 - s^0_k-c^0_k) .
\end{align*}
Differentiating with respect to $p$ yields
\begin{equation*}
    \frac{d\dsep}{dp}(p) = \tilde K \frac{\tilde D^0_k p^2 + \tilde E^0_k p + \tilde F^0_k}{(p-\pi^0)^2(p-s^0_k)^2},
\end{equation*}
where
\begin{align*}
    \tilde D^0_k &:= -\tilde A^0_k(\pi^0 + s^0_k) - \tilde B^0_k , \\
    \tilde E^0_k &:= 2\tilde A^0_k\pi^0s^0_k - 2\tilde C^0_k, \\
    \tilde F^0_k &:= \tilde B^0_k \pi^0s^0_k + \tilde C^0_k(\pi^0+s^0_k) .
\end{align*}
The minimum of $\dsep(p, q)$ on $J_{k,l,i}$ is attained at one of the endpoints or at a real root of $\tilde\Psi^0_k(p):=\tilde D^0_k p^2 + \tilde E^0_k p + \tilde F^0_k$ in the interior of $J_{k,l,i}$. Since Algorithm~\ref{alg:boundary} iterates over all the intervals $J_{k,l,i}$, it suffices to evaluate $\dsep(p,q)$ at the internal roots of $\tilde\Psi^0_k(p)$ and the right endpoint of each interval, as well as at the left endpoint of the leftmost interval, that is $(\max\{\pi^0, \pi^1\}, \qmin)$ (see Appendix~\ref{app:pmax} for the definition of $\qmin$ and an algorithm to compute it).

Finally, if $\pmax=\min\{\smax^0, \smax^1\}$, the boundary of $\CCC$ contains a vertical segment at $p=\pmax$, and it is left to evaluate $\dsep(p,q)$ there. By Proposition~\ref{prop:linearDelta}, it suffices to evaluate $\dsep(p, q)$ at the endpoints of the vertical segment, namely $(\pmax, q(\pmax^-))$ and $(\pmax, \pi^0)$. The lower endpoint coincides with the right endpoint of the rightmost interval $J_{k,l,i}$, hence it suffices to check $\dsep(\pmax, \pi^0)$.

\section{Experiment Details}\label{app:experiments}
\subsection{FICO}
We use the public FICO/TransRisk aggregate statistics accompanying \citet{barocas-hardt-narayanan}.\footnote{\url{https://github.com/fairmlbook/fairmlbook.github.io/tree/master/code/creditscore/data}} The data contain, for each race or ethnicity, the total number of individuals, the cumulative distribution of credit-score bins, and the empirical performance associated with each bin. We restrict attention to the groups labeled ``Non-Hispanic white'' and ``Black'' in the data, referred to below as ``white'' and ``black'', and denoted $A=0$ and $A=1$, respectively.

The reported performance value is the percentage of ``bad'' outcomes in each TransRisk bin. Since we take $Y=1$ to denote non-default, we convert it to a non-default probability by setting
\[
    s_i^a := 1-\mathrm{perf}^a_i/100 .
\]
The group-specific bin weights are obtained by differencing the empirical CDFs,
\[
    P(s^a_i \mid A=a) = \big(F_i^a-F_{i-1}^a\big)/100 ,
\]
and the group proportion is
\[
    P(A=1)
    =\frac{n_{\mathrm{black}}}{n_{\mathrm{white}}+n_{\mathrm{black}}}.
\]
These quantities give the inputs $(s^a_i, P(s^a_i|A=a))$ and $P(A=a)$ for Algorithm~\ref{alg:boundary}. Since the data are already given as aggregate score statistics, no train-test split or model-fitting step is involved.

As a baseline, we compute the unconstrained group-aware Bayes classifier, which thresholds the calibrated score at $\tfrac{1}{2}$ separately within each group. This classifier attains accuracy $0.8819$, but violates predictive parity: its PPV values are $0.9101$ and $0.7854$, and its FOR values are $0.1980$ and $0.1289$, for the ``white'' and ``black'' groups, respectively.

Applying Algorithm~\ref{alg:boundary} with the $0$--$1$ loss objective yields the accuracy-optimal sufficient classifier, attaining
\[
    (p,q)=(0.9116,0.2346)
\]
and accuracy $0.8676$. The optimum lies on $\partial\CC^1$ but in the interior of $\CC^0$; hence, the selection rule for the ``black'' group is a threshold, while the selection rule for the ``white'' group cannot be realized by thresholding alone. One possible realization of the ``white'' group rule is obtained by the mixture construction following Theorem~\ref{thm:ttop}.

For comparison, the classifier minimizing deviation from separation attains
\[
    (p,q)=(0.9284,0.2607),
\]
with $\dsep=0.0702$ and accuracy $0.8659$. Thus, in this example, minimizing deviation from separation yields accuracy very close to that of the accuracy-optimal sufficient classifier.

\subsection{COMPAS}
We use the ProPublica COMPAS two-year recidivism data~\citep{angwin2016machine}.\footnote{\url{https://github.com/propublica/compas-analysis}} We apply the standard filters used in the ProPublica analysis: we keep only individuals whose COMPAS screening date is within $30$ days of arrest, remove observations with invalid recidivism labels, ordinary traffic offenses, or missing score labels, and restrict attention to the groups labeled ``Caucasian'' and ``African-American'' in the data, referred to below as ``white'' and ``black'', and denoted $A=0$ and $A=1$, respectively. We take $Y=1$ to indicate recidivism within two years, and let $d\in\{1,\ldots,10\}$ denote the COMPAS decile score.

Unlike in the FICO experiment, official aggregate score statistics are not available. Motivated by the approximate sufficiency of COMPAS scores~\citep{chouldechova2017fair}, we construct group-calibrated aggregate inputs as follows. For each decile score $d$, we assign the pooled empirical outcome rate
\[
    s_d := \frac{\sum_i \mathbf 1\{d_i=d\}Y_i}
    {\sum_i \mathbf 1\{d_i=d\}},
\]
ignoring group membership. The group-specific weights of these score values are then estimated by
\[
    P(s_d\mid A=a)
    = \frac{\sum_i \mathbf 1\{A_i=a,d_i=d\}}
    {\sum_i \mathbf 1\{A_i=a\}} ,
\]
and the group proportion is
\[
    P(A=1)
    = \frac{\sum_i \mathbf 1\{A_i=1\}}{n}.
\]
Thus, group membership affects the feasible regions only through the group-specific distributions of decile scores. No classifier is trained and no train-test split is used in this experiment; the purpose is to illustrate our post-processing method directly on the COMPAS decile score.

As a baseline, we compute the unconstrained Bayes classifier under the constructed score model, obtained by thresholding the pooled empirical score $s_d$ at $\tfrac{1}{2}$. This classifier attains accuracy $0.6629$, but violates predictive parity: its PPV values are $0.6611$ and $0.6807$, and its FOR values are $0.3273$ and $0.3620$, for the ``white'' and ``black'' groups, respectively.

Applying Algorithm~\ref{alg:boundary} with the $0$--$1$ loss objective yields the accuracy-optimal sufficient classifier, attaining
\[
    (p,q)=(0.6446, 0.3346)
\]
and accuracy $0.6563$. The optimum lies on a breakpoint of $\partial\CC^1$, corresponding to the deterministic threshold that selects all individuals with decile score at least $5$ in the ``black'' group. In contrast, the corresponding point for the ``white'' group lies in the interior of $\CC^0$, and therefore cannot be realized by thresholding alone. The nearest breakpoint of $\partial\CC^0$ corresponds to the deterministic threshold selecting all individuals from the ``white'' group with decile score at least $6$.

For comparison, the classifier minimizing deviation from separation attains
\[
    (p,q)=(0.6807,0.3620),
\]
with $\dsep=0.1464$ and accuracy $0.6526$, again close to the accuracy-optimal sufficient classifier.

\subsection{ACS Income}\label{app:experiments_acs}
We use the ACS Income dataset~\citep{ding2021retiring}, downloaded from OpenML,\footnote{We use the OpenML version accessed via \texttt{sklearn.datasets.fetch\_openml} with \texttt{data\_id=43141}.} and restrict attention to the California subset. The task is to predict whether an individual's annual income exceeds \$50K, using the ACS personal income variable. We use sex as the protected attribute, with $A=0$ for the ``male'' group and $A=1$ for the ``female'' group. The prediction model uses age, class of worker, educational attainment, marital status, occupation, place of birth, relationship to householder, hours worked per week, sex, and race. Numerical features are standardized and categorical features are one-hot encoded, grouping infrequent categories.

We repeat the experiment over $100$ random train/test splits, each with a held-out test set containing $20\%$ of the data and stratified by $(A,Y)$. In each repetition, we train a logistic regression model using stochastic gradient descent with log-loss and $\ell_2$ regularization. To obtain calibrated scores, we compute out-of-fold predicted probabilities on the training set using $5$-fold stratified cross-validation. For each group separately, we divide these out-of-fold predictions into $k=100$ equal-mass bins and assign each bin the empirical mean of $Y$ in that bin. We then fit the model on the full training set, compute predicted probabilities on the test set, and assign each test point to the corresponding calibration-derived bin.

The inputs to Algorithm~\ref{alg:boundary} are constructed as follows. The score values $s^a_i$ are the empirical label means of the calibration bins described above. The bin weights $P(s^a_i\mid A=a)$ are estimated from the held-out test set, using the fraction of test points in group $a$ assigned to each calibration-derived bin. Thus, the binning and calibrated score values are estimated from the training data, while the score-bin weights are estimated from the held-out test scores and group membership, without using test labels. We evaluate the resulting randomized classifier in expectation, using its selection probabilities. As an unconstrained baseline, we threshold the raw logistic-regression output at $\tfrac{1}{2}$.

Over the $100$ repetitions, the post-processed classifier attains average group-wise values $\ppv^0=0.7758$, $\ppv^1=0.7768$, $\for^0=0.1545$, and $\for^1=0.1549$.
The mean absolute PPV gap is $4.8\times 10^{-3}$ $[4.2\times 10^{-3},5.5\times 10^{-3}]$, and the mean absolute FOR gap is $3.7\times 10^{-3}$ $[3.2\times 10^{-3},4.3\times 10^{-3}]$ (brackets denote $95\%$ percentile bootstrap confidence intervals for the mean over repetitions, computed using $10{,}000$ resamples). The mean accuracy is $0.8167$ $[0.8164,0.8171]$.

For comparison, the unconstrained classifier obtains
$\ppv^0=0.7961$, $\ppv^1=0.7600$, $\for^0=0.1728$, and $\for^1=0.1400$.
Its mean absolute PPV gap is $3.61\times 10^{-2}$ $[3.51\times 10^{-2},3.71\times 10^{-2}]$, and its mean absolute FOR gap is $3.29\times 10^{-2}$ $[3.21\times 10^{-2},3.36\times 10^{-2}]$. The mean accuracy of the unconstrained classifier is $0.8190$ $[0.8187,0.8194]$.

Figure~\ref{fig:acsincome} shows the distribution of these quantities over the $100$ repetitions.

\begin{figure}[ht]
    \centering
    \includegraphics[width=\columnwidth]{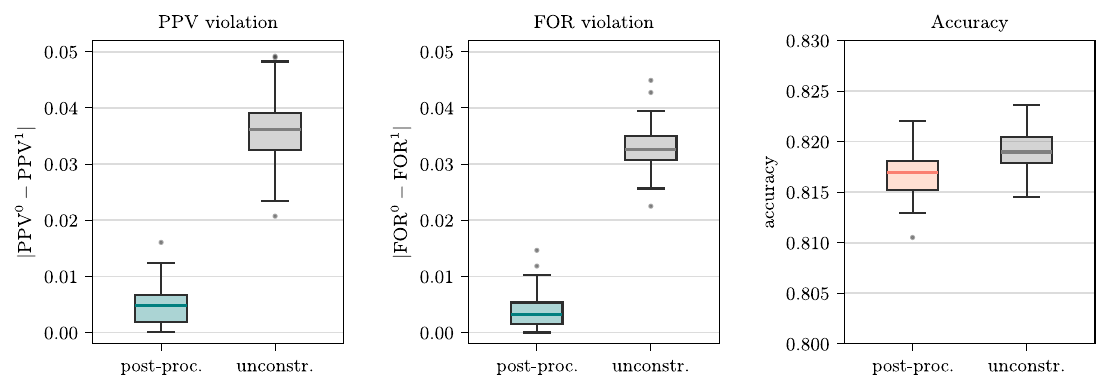}
    \caption{\textbf{ACS Income.} Results over $100$ train/test splits. The post-processed classifier is obtained by Algorithm~\ref{alg:boundary}; the unconstrained baseline thresholds the raw logistic-regression output at $\tfrac{1}{2}$. The post-processed classifier substantially reduces the PPV and FOR gaps, with only a small loss in accuracy. Boxes denote interquartile ranges, center lines denote medians, whiskers extend to $1.5$ times the interquartile range, and dots denote outliers.}
    \label{fig:acsincome}
\end{figure}

\section{Robustness to Calibration Error}\label{app:calibration}
The assumption that $S$ is group-calibrated is central to this work. However, in practice, one might rely on predictions or scores that are only approximately group-calibrated. In this appendix, we show that the predictive parity violation that may result from using our method with approximately calibrated scores is bounded linearly by the maximum calibration error. We illustrate empirically that the violations in practice are considerably lower than this bound, and that the achieved accuracy is stable under such perturbations.

Ignoring the role of $A$, a classifier $R$ with selection rule $P(R|s_i)$ attains
\begin{equation*}
    \ppv(R) = \frac{1}{\mu} \sum\nolimits_{i\in[m]} P(Y=1|s_i)\,P(s_i)\,P(R=1|s_i) ,
\end{equation*}
where $\mu = P(R=1)$. If the scores $S$ are calibrated, this value equals the score-based quantity
\begin{equation}
    p = \frac{1}{\mu} \sum\nolimits_{i\in[m]} s_i\,P(s_i)\,P(R=1|s_i)
\end{equation}
(see Equation~\eqref{eq:mup}). Denoting the (maximum) calibration error of $S$ by $\epsilon:=\max\nolimits_{i\in[m]}|s_i - P(Y=1|s_i)|$, we get
\begin{align*}
    |p - \ppv(R)| &\leq \frac{1}{\mu}\sum\nolimits_{i\in[m]}|s_i - P(Y=1|s_i)|\,P(s_i)\,P(R=1|s_i) \\
    &\leq \left(\frac{1}{\mu}\sum\nolimits_{i\in[m]}P(s_i)\,P(R=1|s_i)\right)\max\nolimits_{i\in[m]}|s_i - P(Y=1|s_i)|
    = \epsilon,
\end{align*}
where the last equality follows from \eqref{eq:mu}.

Now, suppose that $R$ attains predictive parity with respect to $A$, under the assumption that the scores $S$ are perfectly group-calibrated. This means that $p^0=p^1=p$. If the scores are not perfectly group-calibrated, denote by $\epsilon := \max\nolimits_{a\in\{0,1\}}\max_{i\in[m^a]} |s^a_i - P(Y=1|s^a_i,a)|$ the global calibration error. Then, by the bound above,
\begin{align*}
    |\ppv^0(R) - \ppv^1(R)| &\leq |\ppv^0(R) - p^0| + |p^0-p^1| + |p^1 - \ppv^1(R)| \\
    &= |\ppv^0(R) - p^0| + |p^1 - \ppv^1(R)|
    \leq 2\epsilon .
\end{align*}
Analogously, $|\for^0(R)-\for^1(R)|\leq 2\epsilon$.

In other words, if our algorithm is run using miscalibrated scores, the resulting predictive parity violations, measured by the absolute group-wise differences in PPV and FOR, are each at most twice the maximum calibration error.

\subsection{Synthetic Experiment}
To illustrate this result, we use the same setting as in the left pane of Figure~\ref{fig:intersection}: $s^0 = (0.1,0.2,0.5,0.7,0.9)$, $P(s^0|A=0) = (0.1,0.3,0.3,0.15,0.15)$; $s^1 = (0.12,0.3,0.85)$, $P(s^1|A=1) = (0.15,0.45,0.4)$; and $P(A=1)=\tfrac{1}{2}$. The optimal fair classifier (maximizing accuracy) attains $p=0.80$, $q=0.31$, with accuracy $0.7239$.

For ten values of $0.01 \le \epsilon \le 0.10$, we add random uniform perturbations to the scores such that $\max\nolimits_{a\in\{0,1\}}\max_{i\in[m^a]} |s^a_i - \tilde s^a_i| = \epsilon$. The optimal selection rule is computed using Algorithm~1 on the perturbed scores $\tilde{s}$, and then evaluated on the original calibrated scores $s$. This is repeated $100$ times for each value of $\epsilon$. The results, shown in Figure~\ref{fig:synth_cal}, are consistent with the $2\epsilon$ bound on the violation of predictive parity, while the observed differences are typically considerably below that bound. In addition, the achieved accuracy remains close to the true optimal value.

\begin{figure}[ht]
    \centering
    \includegraphics[width=\columnwidth]{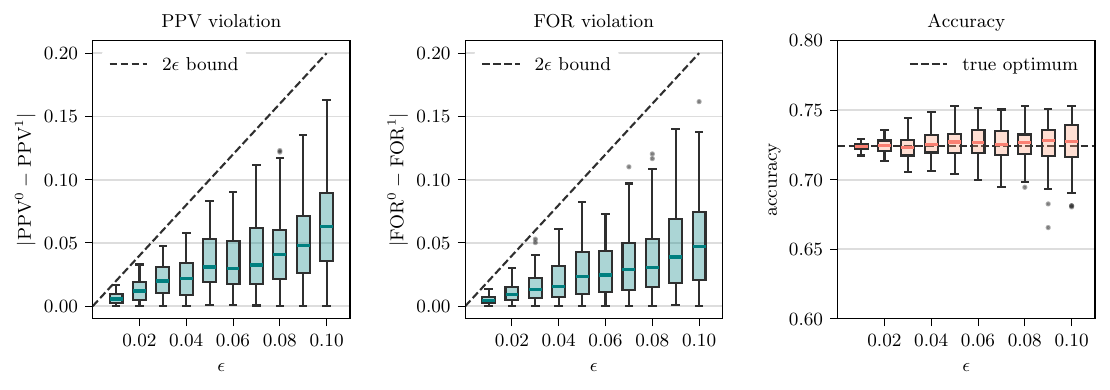}
    \caption{\textbf{Synthetic calibration-error experiment.} For each value of $\epsilon$, boxplots show the results over $100$ random uniform perturbations of the score values, rescaled so that the maximum perturbation equals $\epsilon$. The dashed lines in the first two panels show the theoretical $2\epsilon$ bound on the PPV and FOR violations; the dashed line in the right panel shows the optimal accuracy under the original calibrated scores. Boxes denote interquartile ranges, center lines denote medians, whiskers extend to $1.5$ times the interquartile range, and dots denote outliers.}
    \label{fig:synth_cal}
\end{figure}

\makeatletter
\setlength{\@fptop}{0pt}
\setlength{\@fpbot}{0pt plus 1fil}
\makeatother

\subsection{ACS Income}
To illustrate the effect of approximate calibration on real-world data, we use the same setting as in the ACS Income experiment (see Appendix~\ref{app:experiments_acs}). To induce different levels of calibration error, we subsample fractions $\rho$ of the calibration set and use only the resulting subset to estimate the group-wise score bins. Specifically, for each calibration subset, we recompute the group-wise bin edges and bin values, and then use these edges to bin the fixed test-set scores; the bin weights are computed from the resulting test-bin masses. The results, shown in Figure~\ref{fig:acs_cal}, exhibit a clear pattern: smaller calibration fractions tend to lead to larger predictive parity violations. Nevertheless, when using at least $10\%$ of the original calibration set, the absolute differences between the group-wise PPV and FOR remain typically below $1\%$, and the attained accuracy is less than one percentage point below the average accuracy of the unconstrained classifier.
\begin{figure}[ht]
    \centering
    \includegraphics[width=\columnwidth]{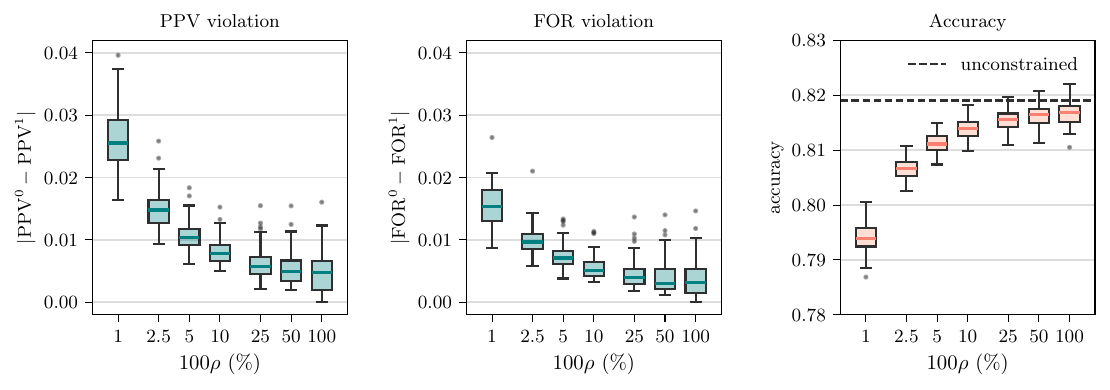}
    \caption{\textbf{ACS Income calibration-subsampling experiment.} We fix the number of calibration bins at $100$, as in the original experiment, and vary only the fraction $\rho$ of the calibration data used to estimate the group-wise binning. For each train/test split and calibration fraction $\rho<1$, results are averaged over $20$ calibration subsamples; for $\rho=1$, the full calibration set is used once. Boxplots show variation over $100$ train/test splits. The horizontal axis is logarithmically spaced, and tick labels show the percentage $100\rho$ of the full calibration set used for calibration. The dashed line in the accuracy panel shows the mean unconstrained accuracy.}
    \label{fig:acs_cal}
\end{figure}


\end{document}